\documentclass{article} %
\usepackage{iclr2023_conference,times}

\usepackage{hyperref}
\usepackage{url}
\hypersetup{
    colorlinks,
    linkcolor={red!50!black},
    citecolor={blue!50!black},
    urlcolor={blue!80!black}
}

\usepackage{graphicx}
\usepackage{subfigure}
\usepackage{wrapfig}
\usepackage{enumitem}
\usepackage{amsmath,amsthm,amsfonts,amssymb,amscd}
\usepackage{physics}
\usepackage{comment}
\usepackage{nicefrac}
\usepackage{booktabs}
\usepackage[capitalize,noabbrev]{cleveref}
\usepackage{chngcntr}

\newtheorem{theorem}{Theorem}[section]

\newtheorem{proposition}{Proposition}[section]

\newcommand{\RR}{\mathbf{R}}
\newcommand{\EE}{\mathbf{E}}

\newcommand{\PP}{\mathbf{P}}
\newcommand{\NN}{\mathbf{N}}
\newcommand{\defeq}{:=}

\newcommand{\prox}{\textup{prox}}

\newcommand{\unif}{\textup{unif}}

\newcommand{\proj}{\pi}

\newcommand{\varspace}{\mathcal{X}}
\newcommand{\param}{\tau}
\newcommand{\Tau}{\mathcal{T}}
\newcommand{\Param}{\Tau}
\newcommand{\WD}{\textup{WD}}
\newcommand{\WP}{\textup{WP}}
\newcommand{\POL}{\textsc{pol}}
\newcommand{\GOL}{\textsc{gol}}
\newcommand{\PD}{\textsc{pd}}
\newcommand{\PGD}{\textsc{pgd}}
\newcommand{\FN}{\textsc{fn}}
\newcommand{\FRCNN}{\textsc{frcnn}}
\newcommand{\best}[1]{\textbf{#1}}
\newcommand{\MCP}{\textup{MCP}}

\DeclareMathOperator*{\argmin}{arg\,min}

\title{Learning Proximal Operators to \\Discover Multiple Optima}

\author{Lingxiao Li\\
MIT CSAIL\\
\texttt{lingxiao@mit.edu}
\And
Noam Aigerman\\
Adobe Research\\
\texttt{aigerman@adobe.com}
\And
Vladimir G. Kim\\
Adobe Research\\
\texttt{vokim@adobe.com}
\AND
Jiajin Li\\
Stanford University\\
\texttt{jiajinli@stanford.edu}
\And
Kristjan Greenewald\\
IBM Research, MIT-IBM Watson AI Lab\\
\texttt{kristjan.h.greenewald@ibm.com}
\And
Mikhail Yurochkin\\
IBM Research, MIT-IBM Watson AI Lab\\
\texttt{mikhail.yurochkin@ibm.com}
\And
Justin Solomon\\
MIT CSAIL\\
\texttt{jsolomon@mit.edu}
}

\iclrfinalcopy %
\begin{document}

\maketitle

\begin{abstract}
  Finding multiple solutions of non-convex optimization problems is a ubiquitous yet challenging task. Most past algorithms either apply single-solution optimization methods from multiple random initial guesses or search in the vicinity of found solutions using ad hoc heuristics. We present an end-to-end method to learn the \emph{proximal operator} of a family of training problems so that multiple local minima can be quickly obtained from initial guesses by iterating the learned operator, emulating the \emph{proximal-point algorithm} that has fast convergence. The learned proximal operator can be further generalized to recover multiple optima for unseen problems at test time, enabling applications such as object detection. The key ingredient in our formulation is a proximal regularization term, which elevates the convexity of our training loss: by applying recent theoretical results, we show that for weakly-convex objectives with Lipschitz gradients, training of the proximal operator converges globally with a practical degree of over-parameterization. We further present an exhaustive benchmark for multi-solution optimization to demonstrate the effectiveness of our method.
\end{abstract}

\section{Introduction}

Searching for multiple optima of an optimization problem is a ubiquitous yet under-explored task.
In applications like low-rank recovery \citep{ge2017no}, topology optimization \citep{papadopoulos2021computing}, object detection \citep{lin2014microsoft}, and symmetry detection \citep{shi2020symmetrynet}, it is desirable to recover multiple near-optimal solutions, either because there are many equally-performant global optima or due to the fact that the optimization objective does not capture user preferences precisely.
Even for single-solution non-convex optimization, typical methods look for multiple local optima from random initial guesses before picking the best local optimum.
Additionally, it is often desirable to obtain solutions to a family of optimization problems with parameters not known in advance, for instance, the weight of a regularization term, without having to restart from scratch.

Formally, we define a \textit{multi-solution optimization} (MSO) problem to be the minimization $\min_{x \in\varspace} f_\param(x)$, where $\param \in \Param$ encodes parameters of the problem, $\varspace$ is the search space of the variable $x$, and $f_\param: \RR^d \to \RR$ is the objective function depending on $\param$.
The goal of MSO is to identify multiple solutions for each $\param \in \Param$, i.e., the set $\{x^*\in\varspace: f_\param(x^*)=\min_{x\in\varspace} f_\param(x)\}$, which can contain more than one element or even infinitely many elements.
In this work, we assume that $\varspace \subset \RR^d$ is bounded and that $d$ is small, and that $\Param$ is, in a loose sense, a continuous space, such that the objective $f_\param$ changes continuously as $\param$ varies.
To make gradient-based methods viable, we further assume that each $f_\param$ is differentiable almost everywhere. 
As finding all global minima in the general case is extremely challenging, realistically our goal is to find a diverse set of local minima. 

As a concrete example, for object detection,
$\Param$ could parameterize the space of images and $\varspace$ could be the 4-dimensional space of bounding boxes (ignoring class labels).
Then, $f_\param(x)$ could be the minimum distance between the bounding box $x \in \varspace$ and any ground truth box for image $\param \in \Param$.
Minimizing $f_\param(x)$ would yield \emph{all} object bounding boxes for image $\param$. 
Object detection can then be cast as solving this MSO on a training set of images and extrapolating to unseen images (\cref{sec:app_objdetect}).
Object detection is a singular example of MSO where the ground truth annotation is widely available.  
In such cases, supervised learning can solve MSO by predicting a fixed number of solutions together with confidence scores using a set-based loss such as the Hausdorff distance.
Unfortunately, such annotation is not available for most optimization problems in the wild where we only have access to the objective functions --- this is the setting that our method aims to tackle.

Our work is inspired by the \textbf{proximal-point algorithm} (PPA), which applies the \textbf{proximal operator} of the objective function to an initial point iteratively to refine it to a local minimum.
PPA is known to converge faster than gradient descent even when the proximal operator is approximated, both theoretically \citep{rockafellar1976monotone, rockafellar2021advances} and empirically (e.g., Figure 2 of \citet{hoheisel2020regularization}).
If the proximal operator of the objective function is available, then MSO can be solved efficiently by running PPA from a variety of initial points.
However, obtaining a good approximation of the proximal operator for generic functions is difficult, and typically we have to solve a separate optimization problem for each evaluation of the proximal operator \citep{davis2019proximally}.

In this work, we approximate the proximal operator using a neural network that is trained using a straightforward loss term including only the objective and a \emph{proximal term} that penalizes deviation from the input point.
Crucially, our training does not require accessing the ground truth proximal operator.
Additionally, neural parameterization allows us to learn the proximal operator for all $\{f_\param\}_{\param \in \Param}$ by treating $\param$ as an input to the network along with an application-specific encoder.
Once trained, the learned proximal operator allows us to effortlessly run PPA from any initial point to arrive at a nearby local minimum; from a generative modeling point of view, the learned proximal operator implicitly encodes the solutions of an MSO problem as the pushforward of a prior distribution by iterated application of the operator.
Such a formulation bypasses the need to predict a fixed number of solutions and can represent infinitely many solutions.
The proximal term in our loss promotes the convexity of the formulation:
applying recent results \citep{kawaguchi2019gradient}, we show that for weakly-convex objectives with Lipschitz gradients---in particular, objectives with bounded second derivatives---with practical degrees of over-parameterization, training converges globally and the ground truth proximal operator is recovered (\cref{thm:conv} below).
Such a global convergence result is not known for any previous learning-to-optimize method \citep{chen2021learning}.

Literature on MSO is scarce, so we build a benchmark with a wide variety of applications including level set sampling, non-convex sparse recovery, max-cut, 3D symmetry detection, and object detection in images.
When evaluated on this benchmark, our learned proximal operator reliably produces high-quality results compared to reasonable alternatives, while converging in a few  iterations.

\section{Related Works}
\textbf{Learning to optimize.}
Learning-to-optimize (L2O) methods utlilize past optimization experience to optimize future problems more effectively; see \citep{chen2021learning} for a survey.
\emph{Model-free} L2O uses recurrent neural networks to discover new optimizers suitable for similar problems \citep{andrychowicz2016learning, li2016learning, chen2017learning, cao2019learning}; while shown to be practical, these methods have almost no theoretical guarantee for the training to converge \citep{chen2021learning}.
In comparison, we learn a problem-dependent proximal operator so that at test time we do not need access to objective functions or their gradients, which can be costly to evaluate (e.g.\ symmetry detection  in \cref{sec:app_symdetect}) or unavailable (e.g.\ object detection  in \cref{sec:app_objdetect}).
\emph{Model-based} L2O substitutes components of a specialized optimization framework or schematically unrolls an optimization procedure with neural networks.
Related to proximal methods, \citet{gregor2010learning} emulate a few iterations of proximal gradient descent using neural networks for sparse recovery with an $\ell^1$ regularizer, extended to non-convex regularizers by \citet{yang2020learning}; a similar technique is applied to susceptibility-tensor imaging in \citet{fang2022deepsti}.
\citet{gilton2021deep} propose a deep equilibrium model with proximal gradient descent for inverse problems in imaging that circumvents expensive backpropagation of unrolling iterations.
\citet{meinhardt2017learning} use a fixed denoising neural network as a surrogate proximal operator for inverse imaging problems. 
All these works use schematics of proximal methods to design a neural network that is then trained with strong supervision.
In contrast, we learn the proximal operator directly, requiring only access to the objectives; we do not need ground truth for inverse problems during training.

Existing L2O methods are not designed to recover multiple solutions: without a proximal term like in \eqref{eqn:direct_prox_learn}, the learned operator can degenerate even with multiple starts (\cref{sec:effect_prox}).

\textbf{Finding multiple solutions.}
Many heuristic methods have been proposed to discover multiple solutions including niching \citep{brits2007locating, li2009niching}, parallel multi-starts \citep{larson2018asynchronously}, and deflation \citep{papadopoulos2021computing}.
However, all these methods do not generalize to similar but unseen problems.

Predicting multiple solutions at test time is universal in deep learning tasks like multi-label classification \citep{tsoumakas2007multi} and detection \citep{liu2020deep}.
The typical solution is to ask the network to predict a fixed number of candidates along with confidence scores to indicate how likely each candidate is a solution \citep{ren2015faster, li2019supervised, carion2020end}.
Then the solutions will be chosen from the candidates using heuristics such as non-maximum suppression \citep{neubeck2006efficient}.
Models that output a fixed number of solutions without taking into account the unordered set structure can suffer from ``discontinuity" issues: a small change in set space requires a large change in the neural network
outputs \citep{zhang2019deep}.
Furthermore, this approach cannot handle the case when the solution set is continuous.

\textbf{Wasserstein gradient flow.}
Our formulation \eqref{eqn:direct_prox_learn} corresponds to one step of JKO discretization of the Wasserstein gradient flow where the energy functional is the the linear functional dual to the MSO objective function \citep{jordan1998variational, benamou2016discretization}. See the details in \cref{sec:connect_wgf}.
Compared to recent works on neural Wasserstein gradient flows \citep{mokrov2021large, hwang2021deep, bunne2022proximal}, where a separate network parameterizes the pushforward map for every JKO step, our functional's linearity makes the pushforward map identical for each step, allowing end-to-end training using a single neural network.
We additionally let the network input a parameter $\param$, in effect learning a continuous family of JKO-discretized gradient flows. %

\section{Method}
\subsection{Preliminaries}
Given the objective $f_\param: \RR^d \to \RR$ of an MSO problem parameterized by $\param$, the corresponding \textbf{proximal operator} \citep{moreau1962fonctions, rockafellar1976monotone, parikh2014proximal} is defined, for a fixed $\lambda \in \RR_{>0}$, as
\begin{align}
  \prox(x;\param) \defeq \argmin_{y}\left\{ f_\param (y) + \frac{\lambda}{2}\norm{y-x}_2^2 \right\}. \label{eqn:prox_op}
\end{align}
The weight $\lambda$ in the \textbf{proximal term} $\nicefrac{\lambda}{2}\norm{y-x}_2^2$\footnote{The usual convention is to use the reciprocal of $\lambda$ in front of the proximal term. We use a different convention to associate $\lambda$ with the convexity of \eqref{eqn:prox_op}.} controls how close $\prox(x;\param)$ is to $x$: increasing $\lambda$ will reduce $\norm{\prox(x;\param) - x}_2$. 
For the $\argmin$ in \eqref{eqn:prox_op} to be unique, a sufficient condition is that $f_\param$ is $\xi$-weakly convex with $\xi < \lambda$, so that 
$f_\param(y) + \frac{\lambda}{2}\norm{y-x}^2$
is strongly convex.
The class of weakly convex functions is deceivingly broad: for instance, any twice differentiable function with bounded second derivatives (e.g. any $C^2$ function on a compact set) is weakly convex.
When the function is convex, $\prox(x;\param)$ is precisely one step of the backward Euler discretization of integrating the vector field $-\nabla f_\param$ with time step $\nicefrac1\lambda$ (see Section 4.1.1 of \citet{parikh2014proximal}).

The \textbf{proximal-point algorithm} (PPA) for finding a local minimum of $f_\param$ iterates
\begin{align*}
  x^k \defeq \prox(x^{k-1}; \param), \forall k \in \NN_{\ge 1},
\end{align*}
with initial point $x^0$~\citep{rockafellar1976monotone}. 
In practice, $\prox(x;\param)$ often can only be approximated, resulting in \emph{inexact} PPA.
When the objective function is locally indistinguishable from a convex function and $x^0$ is sufficiently close to the set of local minima, then with reasonable stopping criterion, inexact PPA converges linearly to a local minimum of the objective:
the smaller $\lambda$ is, the faster the convergence rate becomes (Theorem 2.1-2.3 of \citet{rockafellar2021advances}).

\subsection{Learning Proximal Operators} 
The fast convergence rate of PPA makes it a strong candidate for MSO: to obtain a diverse set of solutions for any $\param \in \Param$, we only need to run a few iterations of PPA from random initial points.
The proximal term penalizes big jumps and prevents points from collapsing to a single solution.
However, running a subroutine to approximate $\prox(x;\param)$ for every pair  $(x,\param)$ can be costly.

To overcome this issue, we \emph{learn} the operator $\prox(\cdot;\cdot)$ given access to $\{f_\param\}_{\param \in \Param}$.
A na\"ive way to learn $\prox(\cdot;\cdot)$ is to first solve \eqref{eqn:prox_op} to produce ground truth for a large number of $(x,\param)$ pairs independently using gradient-based methods and then learn the operator using mean-squared error loss.
However, this approach is costly as the space $\varspace \times \Param$ can be large.
Moreover, this procedure requires a stopping criterion for the minimization in \eqref{eqn:prox_op}, which is hard to design \textit{a priori}.

Instead, we formulate the following end-to-end optimization over the space of functions:
\begin{align}
  \min_{\Phi: \varspace\times\Param \to \varspace}\EE_{\substack{x\sim \mu\\ \param \sim \nu}} \left[f_\param(\Phi(x,\param)) +\frac{\lambda}{2} \norm{\Phi(x,\param)-x}_2^2\right],\label{eqn:direct_prox_learn}
\end{align}
where $x$ is sampled from $\mu$, a distribution on $\varspace$, and $\param$ is sampled from $\nu$, a distribution on $\Param$. 
To get \eqref{eqn:direct_prox_learn} from \eqref{eqn:prox_op}, we essentially substitute $y$ with the output $\Phi(x, \param)$ and integrate over the product probability distribution $\mu \otimes \nu$.

To solve \eqref{eqn:direct_prox_learn}, we parameterize $\Phi: \varspace \times \Param \to \varspace$ using a neural network with additive and multiplicative residual connections (\cref{sec:nn_arch}).
Intuitively, the implicit regularization of neural networks aligns well with the regularity of $\prox(\cdot;\cdot)$: for a fixed $\param$ the proximal operator $\prox(\cdot;\param)$ is 1-Lipschitz in local regions where $f_\param$ is convex, while as the parameter $\param$ varies $f_\param$ changes continuously so $\prox(x;\param)$ should not change too much.
To make \eqref{eqn:direct_prox_learn} computationally practical during training, we realize $\nu$ as a training dataset.
For the choice of $\mu$, we employ an importance sampling technique from \citet{Wang2019PRNet} as opposed to using $\unif(\varspace)$, the uniform distribution over $\varspace$, so that the learned operator can refine near-optimal points (\cref{sec:importance_sample}).
To train $\Phi$, we sample a mini-batch of $(x,\param)$ to evaluate the expectation and optimize using Adam \citep{kingma2014adam}.
For problems where the space $\Param$ is structured (e.g. images or point clouds), we first embed $\param$ into a Euclidean feature space through an encoder before passing it to $\Phi$.
Such encoder is trained together with operator network $\Phi$.
This allows us to use efficient domain-specific encoder (e.g. convolutional networks) to facilitate generalization to unseen $\param$.

To extract multiple solutions at test time for a problem with parameter $\param$, we sample a batch of $x$'s from $\unif(\varspace)$ and apply the learned $\Phi(\cdot,\param)$ to the batch of samples a few times.
Each application of $\Phi$ approximates a single step of PPA.
From a distributional perspective, for $k \in \NN_{\ge 0}$, we can view $\Phi^k$---the operator $\Phi$ applied $k$ times---as a generative model so that the pushforward distribution, $(\Phi^k)_\#(\unif(\varspace))$, concentrates on the set of local minima approximates as $k$ increases.
An advantage of our representation is that it can represent arbitrary number of solutions even when the set of minima is continuous (\cref{fig:conic_vis_short}).
This procedure differs from those in existing L2O methods \citep{chen2021learning}: at test time, we do not need access to $\{f_\param\}_{\param \in \Param}$ or their gradients, which can be costly to evaluate or unavailable; instead we only need $\param$ (e.g.\ in the case of object detection, $\param$ is an image).

\subsection{Convergence of Training}

We have turned the problem of finding multiple solutions for each $f_\param$ in the space $\varspace$ into the problem of finding a single solution for \eqref{eqn:direct_prox_learn} in the space of functions.
If the $f_\param$'s are $\xi$-weakly convex with $\xi < \lambda$ and $\mu$, $\nu$ have full support, then the $\argmin$ in \eqref{eqn:prox_op} is unique for every pair $(x,\param)$ and hence the functional solution of \eqref{eqn:direct_prox_learn} is the unique proxmal operator $\prox(\cdot; \param)$.

If in addition the gradients of the objectives are Lipschitz, using recent learning theory results \citep{kawaguchi2019gradient} we can show that with practical degrees of over-parameterization, gradient descent on neural network parameters of $\Phi$ converges globally during training.
Suppose our training dataset is $S = \{(x_i,\param_i)\}_{i=1}^n \subset \varspace \times \Param$. 
Define the training loss, a discretized version of \eqref{eqn:direct_prox_learn} using $S$, to be, for $g: \varspace \times \Param \to \varspace$,
\begin{align}\label{eqn:train_loss}
L(g) \defeq \frac{1}{n} \sum_{i=1}^n \left[ f_{\param_i}(g(x_i,\param_i)) + \frac{\lambda}{2}\norm{g(x_i,\param_i)-x_i}^2_2 \right].
\end{align}
\begin{theorem}[informal]\label{thm:conv}
Suppose for any $\param \in \Param$, the objective $f_\param$ is differentiable, $\xi$-weakly convex, and $\nabla f_\param$ is $\zeta$-Lipschitz with $\xi \le \lambda$.
Then for any feed-forward neural network with $\tilde{\Omega}(n)$ total parameters\footnote{We use $\tilde{\Omega}$ notation in the standard way, i.e., $f \in \tilde{\Omega}(n) \Longleftrightarrow \exists k \in \NN_{\ge 0} \text{ such that } f \in \Omega(n\log^k n)$.} and common activation units, when the initial weights are drawn from a Gaussian distribution, with high probability, gradient descent on its weights using a fixed learning rate will eventually reach the minimum loss $\min_{g:\varspace \times \Param \to \varspace} L(g)$.
The number of iterations needed to achieve $\epsilon > 0$ training error is $O((\lambda + \zeta) / \epsilon)$, and when this occurs, if $\xi < \lambda$, then the mean-squared error of the learned proximal operator compared to the true one is $O(\nicefrac{2\epsilon}{(\lambda - \xi)})$ on training data.
\end{theorem}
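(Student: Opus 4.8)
The plan is to recognize \eqref{eqn:train_loss} as an instance of the empirical-risk setup analyzed by \citet{kawaguchi2019gradient}, in which gradient descent on the weights of a sufficiently over-parameterized feed-forward network provably reaches a global minimum whenever the per-sample loss, viewed as a function of the network output, is convex and smooth. Concretely, I would write $L(g) = \frac1n\sum_{i=1}^n h_i(g(x_i,\param_i))$ where $h_i(y) \defeq f_{\param_i}(y) + \frac\lambda2\norm{y-x_i}_2^2$ is exactly the objective whose minimizer defines $\prox(x_i;\param_i)$ in \eqref{eqn:prox_op}. The first step is a purely pointwise analysis of each $h_i$: because $f_{\param_i}$ is $\xi$-weakly convex, its Hessian (where defined) satisfies $\nabla^2 f_{\param_i} \succeq -\xi I$, so $\nabla^2 h_i \succeq (\lambda-\xi) I \succeq 0$ under the hypothesis $\xi \le \lambda$; hence each $h_i$ is convex, and $(\lambda-\xi)$-strongly convex when $\xi < \lambda$. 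Likewise, since $\nabla f_{\param_i}$ is $\zeta$-Lipschitz, $\nabla h_i(y) = \nabla f_{\param_i}(y) + \lambda(y-x_i)$ is $(\lambda+\zeta)$-Lipschitz, so each $h_i$ is $(\lambda+\zeta)$-smooth.

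With convexity and $(\lambda+\zeta)$-smoothness of the per-sample losses in hand, the second step is to invoke the main convergence theorem of \citet{kawaguchi2019gradient}. Their hypotheses---a feed-forward architecture with common activations, $\tilde{\Omega}(n)$ parameters, and Gaussian-initialized weights---are precisely the ones assumed here, and their conclusion gives, with high probability over the initialization, that gradient descent with a fixed step size drives the empirical loss to its global infimum over parameters. Over-parameterization guarantees that this parameter-space infimum coincides with the functional optimum $\min_{g} L(g) = \frac1n\sum_i h_i(\prox(x_i;\param_i))$, since the network can interpolate the $n$ optimal outputs $\prox(x_i;\param_i)$. The smoothness constant $\lambda+\zeta$ is what enters the step-size and rate analysis, yielding the stated $O((\lambda+\zeta)/\epsilon)$ iteration count to reach optimality gap $L(g)-\min_g L(g)\le\epsilon$.

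The third step converts this optimality gap into the claimed mean-squared error bound, and this is where strong convexity is used. When $\xi<\lambda$, each $h_i$ is $(\lambda-\xi)$-strongly convex with unique minimizer $\prox(x_i;\param_i)$, so $h_i(y)-h_i(\prox(x_i;\param_i)) \ge \frac{\lambda-\xi}2\norm{y-\prox(x_i;\param_i)}_2^2$. Summing over $i$ and identifying the left-hand side with $L(g)-\min_g L(g)\le\epsilon$ gives $\frac1n\sum_i\norm{g(x_i,\param_i)-\prox(x_i;\param_i)}_2^2 \le \nicefrac{2\epsilon}{(\lambda-\xi)}$, which is exactly the asserted bound on training data.

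The main obstacle I anticipate is not any single computation but verifying that the abstract hypotheses of \citet{kawaguchi2019gradient} genuinely apply to our per-sample losses $h_i$, which differ across samples and are not of the familiar squared-error form. In particular one must confirm that their framework permits a distinct convex, smooth loss for each datapoint (here arising because $x_i$ and $\param_i$ enter $h_i$), and that their global-minimum characterization survives when the per-sample minima $h_i(\prox(x_i;\param_i))$ are nonzero, so that the reached parameter-space optimum still equals the functional optimum. Since $\varspace$ is bounded and the weak-convexity and Lipschitz-gradient assumptions are uniform in $\param$, the convexity and smoothness of $h_i$ hold on the region where the network outputs lie, and I expect these conditions to be met; carefully matching their notation and assumptions to our setting is the delicate part, while everything after that is a direct application plus the elementary strong-convexity estimate above.
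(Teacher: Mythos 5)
Your proposal is correct and follows essentially the same route as the paper: the appendix proves the theorem in two propositions that (i) verify each per-sample loss $\ell_i(y)=f_{\param_i}(y)+\frac{\lambda}{2}\norm{y-x_i}_2^2$ is convex with $(\lambda+\zeta)$-Lipschitz gradient and then invoke Theorem 1 of \citet{kawaguchi2019gradient}, and (ii) use $(\lambda-\xi)$-strong convexity at the minimizer $\prox(x_i;\param_i)$ to convert the $\epsilon$ loss gap into the $\nicefrac{2\epsilon}{(\lambda-\xi)}$ mean-squared-error bound. The only detail you flag but do not resolve---matching the hypotheses of \citet{kawaguchi2019gradient} exactly---is handled in the paper by normalizing and padding the inputs $(x_i,\param_i)$ to unit length and by stating explicit layer-wise width bounds, which are minor technicalities.
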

We state and prove \cref{thm:conv} formally in \cref{sec:global_conv}.
Even though the optimization over network weights is non-convex, training can still result in a globally minimal loss and the true proximal operator can be recovered.
In \cref{sec:conv_prox_l1}, we empirically verify that when the objective is the $\ell^1$ norm, the trained operator converges to the true proximal operator, the shrinkage operator. %
In \cref{sec:effect_prox}, we study the effect of $\lambda$ in relation to the weakly-convex constant $\xi$ for the 2D cosine problem and compare to an L2O particle-swarm method \citep{cao2019learning}.

We note a few gaps between \cref{thm:conv} and our implementation.
First, we use SGD with mini-batching instead of gradient descent.
Second, instead of feed-forward networks, we use a network architecture with residual connections (\cref{fig:nn_arch}), which works better empirically.
Under these conditions, global convergence results can still be obtained, e.g., via \citep[Theorems 6 and 8]{allen2019convergence}, but with large polynomial bounds in $n, H$ for the network parameters.
Another gap is caused by the restriction of the function class of the objectives.
In several applications in \cref{sec:applications}, the objective functions are not weakly convex or have Lipschitz gradients, or we deliberately choose small $\lambda$ for faster PPA convergence; we empirically demonstrate that our method remains effective.

\vspace*{-0.2in}
\section{Performance Measures}\label{sec:metrics}
\vspace*{-0.1in}
\begin{wrapfigure}{R}{0.54\textwidth}
    \centering
    \vspace*{-0.55in}
    \includegraphics[height=0.8in, trim=0.5in 9.6in 5.45in 0.50in,  clip]{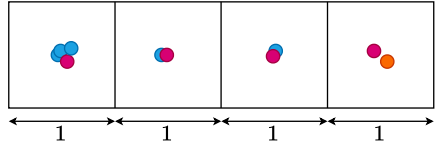}
    \vspace*{-0.12in}
    \caption{Interpretation of $D_t$.
    In this example, the witness $W$ is drawn uniformly from the union of four squares.
    If $A_t$ (resp. $B_t$) is the set of red (resp. blue) points, then $\PP(D_t \approx 0) = \nicefrac{3}{4}$ and $\PP(D_t \approx 0.5) = \nicefrac{1}{4}$, since $D_t$ is only non-zero when $W$ is in the rightmost square.
    This aligns well with the intuition that $\nicefrac{3}{4}$ of the red points match with the blue ones.
    In comparison, the Hausdorff distance between $A_t$ and $B_t$ is approximately $1$, which is the same as the Hausdorff distance between the orange point and $B_t$, despite the fact most of red points are close to the blue ones.
    }
    \label{fig:witness_vis}
\end{wrapfigure}

\textbf{Metrics.} 
Designing a single-valued metric for MSO is challenging since one needs to consider the diversity of the solutions as well each solution's level of optimality. 
For an MSO problem with parameter $\param$ and objective $f_\param$, the output of an MSO algorithm can be represented as a (possibly infinite) set of solutions $\{x_\alpha\}_\alpha \subset \varspace$ with objective values $u_\alpha \defeq f_\param(x_\alpha)$.
Suppose we have access to ground truth solutions $\{y_\beta\}_\beta \subset \varspace$ with $v_\beta \defeq f_\param(y_\beta)$.
Pick a threshold $t \in \RR$ and denote $A_t \defeq \{x_\alpha: u_\alpha \le t\}, B_t \defeq \{y_\beta: v_\beta \le t\}$. 
Let $W$ be a random variable that is uniformly distributed on $\varspace$.
Define a random variable
\begin{align}\label{eqn:witness_D}
D_t \defeq \frac{1}{2}\norm{\proj_{A_t}(W) - \proj_{B_t}(\proj_{A_t}(W))}_2 + 
\frac{1}{2}\norm{\proj_{B_t}(W) - \proj_{A_t}(\proj_{B_t}(W))}_2,
\end{align}
where 
$\proj_S(x) \defeq \argmin_{s\in S} \norm{x-s}_2$. %
We call $W$ a \emph{witness} of $D_t$, as  it witnesses how different $A_t$ and $B_t$ are near $W$.
To summarize the law of $D_t$, we define the \emph{witnessed divergence} and \emph{witnessed precision at $\delta > 0$} as 
\begin{equation}\label{eqn:witness_metrics}
\WD_t \defeq \EE[D_t]\quad\textrm{and}\quad \WP^\delta_t \defeq \PP(D_t < \delta).
\end{equation}
Witnesses help handle unbalanced clusters that can appear in the solution sets. 
These metrics are agnostic to duplicates, unlike the chamfer distance or optimal transport metrics.
Compared to alternatives like the Hausdorff distance, $\WD_t$ remains low if a small portion of $A_t,B_t$ are mismatched. 
We illustrate these metrics in \cref{fig:witness_vis}. 
One can interpret $\WD_t$ as a weighted chamfer distance whose weight is proportional to the volume of the $\ell^2$-Voronoi cell at each point in either set.

\textbf{Particle Descent: Ground Truth Generation.} %
A na\"{i}ve method for MSO is to run gradient descent until convergence on randomly sampled particles in $\varspace$ for every $\param \in \Param$.
We use this method to generate approximated ground truth solutions to compute the metrics in \eqref{eqn:witness_metrics} when the ground truth is not available.
This method is not directly comparable to ours since it cannot generalize to unseen $\param$'s at test time.
Remarkably, for highly non-convex objectives, particle descent can produce worse solutions than the ones obtained using the learned proximal operator (\cref{fig:mixed_ncvx_hist}).

\textbf{Learning Gradient Descent Operators.} 
As there is no readily-available application-agnostic baseline for MSO, we propose the following method that learns iterations of the gradient descent operator.
Fix $Q \in \NN_{\ge 1}$ and a step size $\eta > 0$.
We optimize an operator $\Psi$ via
\begin{equation}\label{eqn:gol}
  \min_{\Psi: \varspace \times \Param \to \varspace} \EE_{\substack{x\sim \mu\\ \param \sim \nu}} \norm{\Psi(x,\param) - \Psi^*_Q(x;\param)}_2^2,
\end{equation}
where $\Psi^*_Q(x;\param)$ is the result of $Q$ steps of gradient descent on $f_\param$ starting at $x$, i.e., 
$\Psi^*_0(x;\param) = x$, and
$\Psi^*_k(x;\param) = \Psi^*_{k-1}(x;\param) - \eta \nabla f_\param(\Psi^*_{k-1}(x;\param))$.
Each iteration of minimizing \eqref{eqn:gol} requires $Q$ evaluations of $\nabla f_\param$, which can be costly (e.g., for symmetry detection in \cref{sec:app_symdetect}).
We use importance sampling similar to \cref{sec:importance_sample}.
An ODE interpretation is that %
$\Psi$ performs $Q$ iterations of \emph{forward} Euler on the gradient field $\nabla f_\param$, whereas the learned proximal operator performs a single iteration of \emph{backward} Euler.
We choose $Q=10$ for all experiments except for symmetry detection (\cref{sec:app_symdetect}) where we choose $Q = 1$ because otherwise the training will take $> 200$ hours.
As we will see in \cref{fig:mixed_ncvx_vis}, aside from slower training, this approach struggles with non-smooth objectives due to the fixed step size $\eta$, while the learned proximal operator has no such issues.

\vspace*{-0.05in}
\section{Applications}\label{sec:applications}
\vspace*{-0.05in}
We consider five applications to benchmark our MSO method, chosen to highlight the ubiquity of MSO in diverse settings.
We abbreviate $\POL$ for proximal operator learning (proposed method), $\GOL$ for gradient operator learning (\cref{sec:metrics}), and $\PD$ for particle descent (\cref{sec:metrics}).
Further details about each application can be found in \cref{sec:app_sec}.
The source code for all experiments can be found at \url{https://github.com/lingxiaoli94/POL}.

\vspace*{-0.05in}
\subsection{Sampling from Level Sets}\label{sec:app_level_sets}
\vspace*{-0.05in}
\begin{wrapfigure}{R}{0.55\textwidth}
    \centering
    \vspace*{-0.2in}
    \includegraphics[width=0.50\textwidth]{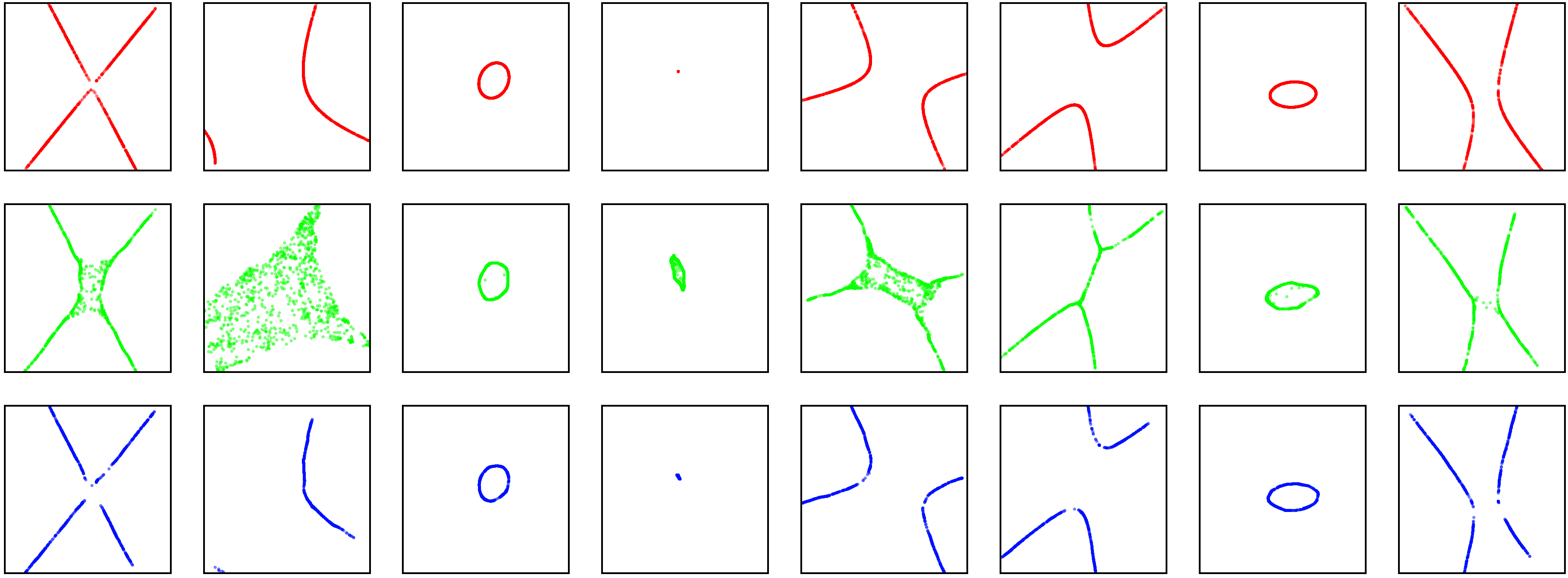}
    \caption{Visualization of the solutions for the conic section problem. {\color{red} Red}, {\color{green} green}, and {\color{blue} blue} indicate the solutions by $\PD$, $\GOL$, and $\POL$ respectively. See \cref{fig:conic_vis} for more examples.}
    \label{fig:conic_vis_short}
    \vspace*{-0.2in}
\end{wrapfigure}

\textbf{Formulation.}
Level sets provide a concise and resolution-free implicit shape representation \citep{museth2002level, park2019deepsdf,sitzmann2020implicit}.
Yet they are less intuitive to work with, even for straightforward tasks on discretized domains (meshes, point clouds) like visualizing or integration on the domain.
We present an MSO formulation to sample from level sets, enabling the adaptation of downstream tasks to level sets.

Given a family of functions $\{g_\param: \varspace \to \RR^q\}_{\param \in \Param}$, for each $\param$ suppose we want to sample from the 0-level set
$g_\param^{-1}(0)$.
We formulate an MSO problem with objective $f_\param(x) \defeq \norm{g_\param(x)}_2^2$, whose global optima are precisely $g_\param^{-1}(0)$.
We do not need assumptions on level set topology or that the implicit function represents a distance field, unlike most existing methods
\citep{park2019deepsdf, deng2020cvxnet, chen2020bsp}.

\textbf{Benchmark.}
We consider sampling from conic sections. 
We keep this experiment simple so as to visualize the solutions easily.
Let $\varspace = [-5,5]^2$ and $\Param = [-1,1]^6$.
For $\param = (A,B,C,D,E,F) \in \Param$, define $g_\param$ to be 
$g_\param(x_1, x_2) \defeq Ax^2+Bxy+Cy^2+Dx+Ey+F$.
Since $f_\param = (g_\param)^2$ is a defined on a compact $\varspace$, it satisfies the conditions of \cref{thm:conv} for a large $\lambda$, but a large $\lambda$ corresponds to small PPA step size. Empirically, small $\lambda$ for $\POL$ gave decent results compared to $\GOL$:
\cref{fig:conic_vis_short} illustrates that $\POL$ consistently produces sharper level sets for both hyperbolas ($B^2-4AC > 0$) and ellipses ($B^2-4AC < 0$). 
\cref{fig:conic_wp} shows that $\POL$ yields significantly higher $\WP_t^\delta$ than $\GOL$ for small $\delta$, implying that details are well recovered.
\cref{fig:conic_conv} verifies that iterating the trained operator of $\POL$ converges much faster than that of $\GOL$.
It is straightforward to extend this setting to sample from more complicated implicit shapes parameterized by $\tau$.

\vspace*{-0.05in}
\subsection{Sparse Recovery}\label{sec:app_sparse_recovery}
\vspace*{-0.05in}
\textbf{Formulation.} 
In signal processing, the \emph{sparse recovery} problem aims to recover a signal $x^* \in \varspace \subset \RR^d$ from a noisy measurement $y \in \RR^m$ distributed according to $y=Ax^* + e,$
where $A\in \RR^{m \times d}$, $m < d$, and $e$ is measurement noise \citep{beck2009fast}.
In applications like imaging and speech recognition, the signals are \emph{sparse}, with few non-zero entries \citep{marques2018review}. 
Hence, the goal of sparse recovery is to recover a sparse $x^*$ given $A$ and $y$.

A common way to encourage sparsity is to solve least-squares plus an $\ell^p$ norm on the signal:
\begin{equation}\label{eqn:sparse_recovery_sip}
    \min_{x\in \varspace}\norm{Ax - y}_2^2 + \alpha \norm{x}_p^p, 
\end{equation}
for $\alpha, p > 0$ and $\norm{x}_p^p \defeq \sum_{i=1}^d (x_i^2 + \epsilon)^{p/2}$ for a small $\epsilon$ to prevent instability.
We consider the non-convex case where $0< p < 1$. 
Compared to convex alternatives like in LASSO ($p = 1$), non-convex $\ell^p$ norms require milder conditions under which the global optima of \eqref{eqn:sparse_recovery_sip} are the desired sparse $x^*$ \citep{chartrand2008restricted, chen2014convergence}.

To apply our MSO framework, we define $\param = (\alpha, p) \in \Param$ and $f_\param$ to be the objective \eqref{eqn:sparse_recovery_sip} with corresponding $\alpha, p$.
Compared to existing methods for non-convex sparse recovery \citep{lai2013improved}, our method can recover multiple solutions from the non-convex landscape for a family of $\alpha$'s and $p$'s without having to restart. The user can adjust parameters $\alpha, p$ to quickly generate candidate solutions before choosing a solution based on their preference.

\textbf{Benchmark.}
Let $\varspace=[-2, 2]^8, \Param = [0,1]\times [0.2, 0.5]$.
We consider highly non-convex $\ell^p$ norms with $p \in [0.2, 0.5]$ to test our method's limits.
We choose $d=8$ and $m=4$, and sample the sparse signal $x^*$ uniformly in $\varspace$ with half of the coordinates set to $0$. 
We then sample entries in $A$ i.i.d.\ from $\mathcal{N}(0, 1)$ and generate $y = Ax^* + e$ where $e \sim \mathcal{N}(0, 0.1)$.
Although $\norm{x}_p^p$ is not weakly convex, $\POL$ achieves decent results (\cref{fig:mixed_ncvx_vis}).
Notably, $\POL$ often reaches a better objective than $\PD$ (\cref{fig:mixed_ncvx_hist}) while retaining diversity, even though $\POL$ uses a much bigger step size ($\nicefrac{1}{\lambda}=0.1$ compared to $\PD$'s $10^{-5}$) and needs to learn a different operator for an entire family of $\param \in \Param$.
In \cref{fig:quasinorm_cmp}, we additionally compare $\POL$ with proximal gradient descent \citep{tibshirani2010proximal} for $p=\nicefrac{1}{2}$ where the corresponding thresholding formula has a closed-form \citep{cao2013fast}. Remarkably, we have observed superior performance of $\POL$ against such a strong baseline.

\vspace*{-0.05in}
\subsection{Rank-2 Relaxation of Max-Cut}\label{sec:app_max_cut}
\vspace*{-0.05in}
\textbf{Formulation.}
MSO can be applied to solve combinatorial problems that admit smooth non-convex relaxations.
Here, we consider the classical problem of finding the maximum cut of an undirected graph $G=(V,E)$, where $V = \{1,\ldots,n\}$, $E\subset 
V \times V$, with edge weights $\{w_{ij}\} \subset \RR$ so that $w_{ij} = 0$ if $(i,j)\notin E$.
The goal is to find $\{x_i\} \in \{-1,+1\}^V$ to maximize $\sum_{i,j} w_{ij}(1-x_ix_j)$.

\citet{burer2002rank} propose solving 
$\min_{\theta \in \RR^n} \sum_{i,j} w_{ij}\cos(\theta_i-\theta_j)$,
a rank-2 non-convex relaxation of the max-cut problem.
This objective inherits weak convexity from cosine, so it satisfies the conditions of \cref{thm:conv}.
In practice, instead of using angles as the variables which are ambiguous up to $2\pi$, we represent each variable as a point on the unit circle $S^1$, so we choose $\varspace = (S^1)^n$ and $\Param$ be the space of all edge weights with $n$ vertices.
For $\param = \{\param_{ij}\} \in \Param$ corresponding to a graph with edge weights $\{\param_{ij}\}$, we define, for $x \in \varspace$,
\begin{equation}\label{eqn:max_cut_circle}
    f_\param(x) \defeq \sum_{i,j} \param_{ij} x_i^\top x_j.
\end{equation}
After minimizing $f_\tau$, we can find cuts using a \citeauthor{goemans1995improved}-type procedure (\citeyear{goemans1995improved}).
Instead of using heuristics to find optima near a solution \citep{burer2002rank}, our method can help the user effortlessly explore the set of near-optimal solutions without hand-designed heuristics.

\textbf{Benchmark.}
We apply our formulation to $K_8$, the complete graph with $8$ vertices.
Hence $\varspace = (S^1)^8 \subset \RR^{16}$. 
We choose $\Param= [0,1]^{28}$ as there are $28$ edges in $K_8$.
We mix two types of random graphs with 8 vertices in training and testing:  Erd\H{o}s-R\'enyi graphs with $p=0.5$ and $K_8$ with uniform edge weights in $[0, 1]$.
\cref{fig:maxcut_vis_single} shows that $\POL$ can generate diverse set of max cuts.
Quantitatively, compared to $\GOL$, $\POL$ achieves better witnessed metrics (\cref{fig:maxcut_wp}).

\begin{figure*}[htb]
    \centering
    \includegraphics[width=0.7\textwidth]{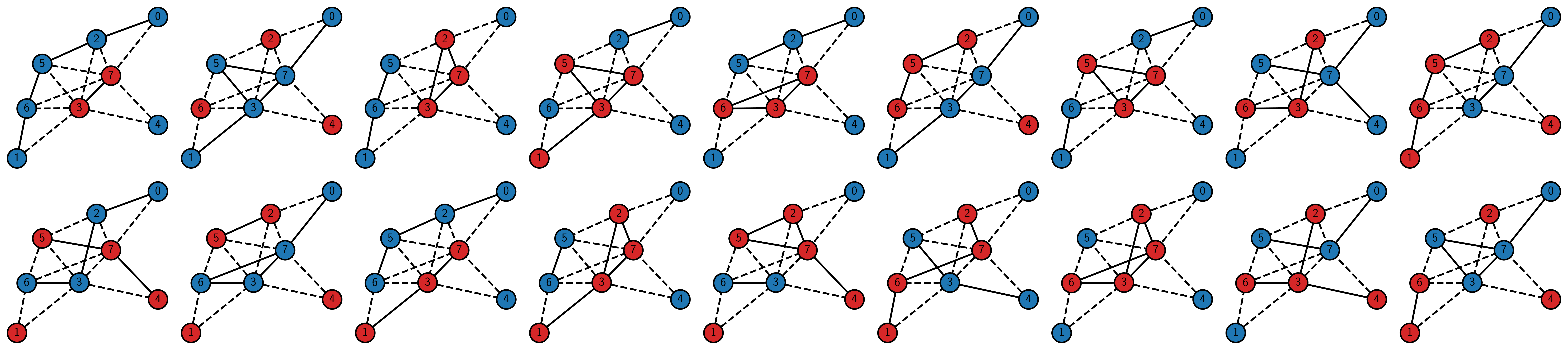}
    \caption{18 different max cuts (max cut value 10) of a graph generated by our method.
    Red and blue vertices indicate the two vertex set separated by the cut.
    Vertex 0 is set to blue to remove the duplicates obtained by swapping the colors.
    See \cref{fig:maxcut_vis} for more results.}
    \label{fig:maxcut_vis_single}
    \vspace{-0.2in}
\end{figure*}

\vspace*{-0.05in}
\subsection{Symmetry Detection of 3D Shapes}\label{sec:app_symdetect}
\vspace*{-0.05in}
\textbf{Formulation.}
Geometric symmetries are omnipresent in natural and man-made objects. 
Knowing symmetries can benefit downstream tasks in geometry and vision \citep{mitra2013symmetry,shi2020symmetrynet,zhou2021nerd}.
We consider the problem of finding all reflection symmetries of a 3D surface.
Let $\param$ be a shape representation (e.g.\ point cloud, multi-view scan), and let $\mathcal{M}_\param \subset \RR^3$ denote the corresponding triangular mesh that is available for the training set.
As reflections are determined by the reflectional plane, we set $\varspace = S^2 \times \RR_{\ge 0}$, where $x=(n,d) \in \varspace$ denotes the plane with unit normal $n \in S^2 \subset \RR^3$ and intercept $d \in \RR_{\ge 0}$ (we assume $d \ge 0$ to remove the ambiguity of $(-n,-d)$ representing the same plane).
Let $R_x: \RR^3\to\RR^3$ denote the corresponding reflection. 
Perfect symmetries of $\mathcal{M}_\param$ satisfy $R_x(\mathcal{M}_\param) = \mathcal{M}_\param$.
Let $s_\param: \RR^3 \to \RR$ be the (unsigned) distance field of $\mathcal{M}_\param$ given by
$s_\param(p) = \min_{q\in\mathcal{M}_\param}\norm{p - q}_2$.
Inspired by \citet{podolak2006planar}, we define the MSO objective to be
\begin{align}\label{eqn:symdetect_obj}
    f_\param(x) \defeq \EE_{p \sim \mathcal{M}_\param} [s_\param(R_x(p))],
\end{align}
where a batch of $p$ is sampled uniformly from $\mathcal{M}_\param$ when evaluating the expectation.
Although $f_\param$ is stochastic, since we use point-to-mesh distances to compute $s_\param$, perfect symmetries will make \eqref{eqn:symdetect_obj} zero with probability one.
Compared to existing methods that either require ground truth symmetries obtained by human annotators \citep{shi2020symmetrynet} or detect only a small number of symmetries \citep{gao2020prs}, our method applied to \eqref{eqn:symdetect_obj} finds arbitrary numbers of symmetries including continuous ones and can generalize to unseen shapes, without needing ground truth symmetries as supervision.

\textbf{Benchmark.}
We detect reflection symmetries for mechanical parts in the MCB dataset \citep{sangpil2020large}.
We choose $\Param$ to be the space of 3D point clouds representing mechanical parts.
From the mesh of each shape, we sample 2048 points with their normals uniformly and use DGCNN \citep{wang2019dynamic} to encode the oriented point clouds.
\cref{fig:symdetect_vis_short} show our method's results on a selection of models in the test dataset; for per-iteration PPA results of our method, see \cref{fig:symdetect_vis_long}.
\cref{fig:symdetect_wp} shows that $\POL$ achieves much higher witnessed precision compared to $\GOL$.
\begin{figure*}[htb]
    \centering
    \includegraphics[width=0.95\textwidth, trim=0 9in 0 1in, clip]{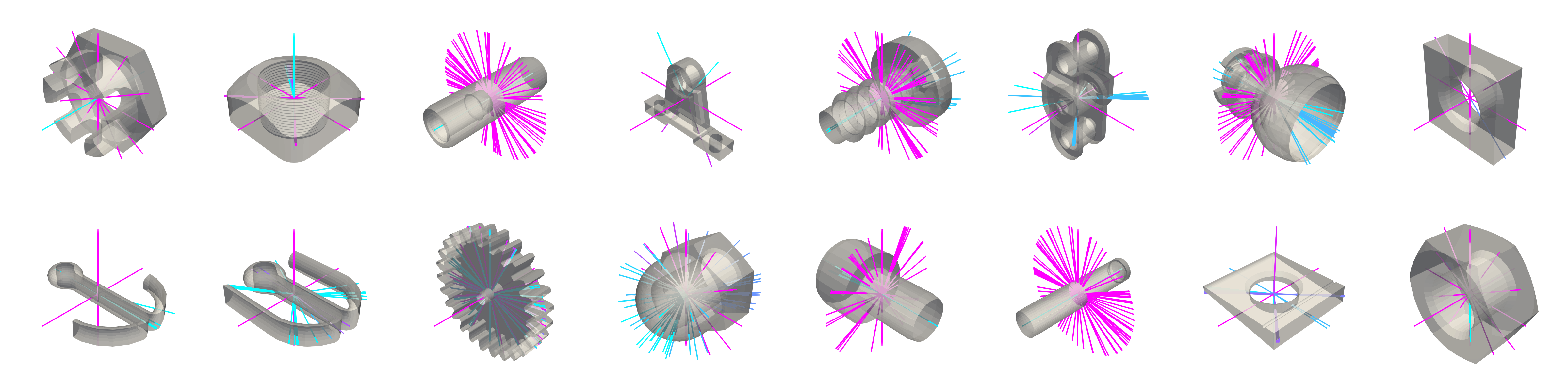}
    \includegraphics[width=0.95\textwidth, trim=0 1.3in 0 9in, clip]{figures/symdetect_vis_short.png}
    \caption{Symmetry detection results. Each reflection is represented as a colored line segment representing the normal of the reflection plane with one endpoint on the plane.
    Pink indicates better objective values, while blue indicates worse.
    Our method is capable of detecting complicated discrete symmetries as well as continuous families of cylindrical reflectional symmetries.}
    \label{fig:symdetect_vis_short}
    \vspace{-0.2in}
\end{figure*}

\vspace*{-0.05in}
\subsection{Object Detection in Images}\label{sec:app_objdetect}
\vspace*{-0.05in}
\textbf{Formulation.}
Identifying objects in an image is a central problem in vision on which recent works have made significant progress \citep{ren2015faster, carion2020end, liu2021swin}.
We consider a simplified task where we drop the class labels and predict only bounding boxes.
Let $b=(x,y,w,h)\in \varspace = [0,1]^4$ denote a box with (normalized) center coordinates $(x,y)$, width $w$, and height $h$. 
We choose $\Param$ to be the space of images.
Suppose an image $\param$ has $K_\param$ ground truth object bounding boxes $\{b^\param_i\}_{i=1}^{K_\param}$. 
We define the MSO objective to be
$f_\param(x) \defeq \min_{i=1}^{K_\param} \norm{b^\param_i - x}_1$;
its minimizers are exactly $\{b^\param_i\}_{i=1}^{K_\param}$.
Although the objective may seem trivial, its gradients reveal the $\ell^1$-Voronoi diagram formed by $b_i^\param$'s when training the proximal operator.
Different from existing approaches, we encode the distribution of bounding boxes conditioned on each image in the learned proximal operator without needing to predict confidence scores or a fixed number of boxes.
A similar idea based on diffusion is recently proposed by \citet{chen2022diffusiondet}.

\textbf{Benchmark.}
We apply the above MSO formulation to the COCO2017 dataset \citep{lin2014microsoft}.
As $\param$ is an image, we fine-tune ResNet-50 \citep{he2016deep} to encode $\param$ into a vector $z$ that can be consumed by the operator network (\cref{fig:nn_arch}).

\vspace*{-0.2in}
\begin{table}[htb]
  \caption{Object detection results. $\WD_\infty$ (resp.\ $\WP_\infty^{0.1}$) is the witnessed divergence (resp.\ precision) in \eqref{eqn:witness_metrics} with $t =\infty$ (i.e.\ keeping all solutions), averaged over 10 trials (standard deviation $< 10^{-3}$).
  Precision and recall are computed with Hungarian matching as no confidence score is available for the usual greedy matching (see \cref{sec:result_objdetect}). 
  $\FRCNN(.S)$ \citep{ren2015faster} means keeping predictions with confidence $\ge S\%$ for Faster R-CNN.
  }
  \label{sample-table}
  \begin{center}
    \begin{small}
      \begin{sc}
        \begin{tabular}{lcccc}
          \toprule
          method & $\WD_\infty$& $\WP_\infty^{0.1}$& precision & recall  \\
          \midrule
          $\FRCNN(.80)$ & \best{0.140} & \best{0.624} & 0.778 & \best{0.650} \\
          $\FRCNN(.95)$ & 0.162 & 0.589 & \best{0.887} & 0.515  \\
          \midrule
          $\FN$ & 0.161 & 0.481 & 0.139 & \best{0.577} \\
          $\GOL$ & 0.251 & 0.243 & 0.508 & 0.282 \\
          $\POL$ (ours) & \best{0.149} & \best{0.590} & \best{0.817} & 0.442 \\
          \bottomrule
        \end{tabular}
      \end{sc}
    \end{small}
  \end{center}
  \label{tab:objdetect_results}
  \vspace*{-0.2in}
\end{table}

In addition to $\GOL$, we design a baseline method $\FN$ that uses the same ResNet-50 backbone and predicts a fixed number of boxes using the chamfer distance as the training loss.
\cref{tab:objdetect_results} compares the proposed methods with alternatives and the highly-optimized Faster R-CNN \citep{ren2015faster} on the test dataset.
Since we do not output confidence scores, the metrics are computed solely based on the set of predicted boxes. 
Our method achieves significantly better results than $\FN$ and $\GOL$. 
Compared to the Faster R-CNN, we achieve slightly worse results with $40.7\%$ fewer network parameters.
While Faster R-CNN contains highly-specialized modules such as the regional proposal network, in our method we simply feed the image feature vector output by ResNet-50 to a general-purpose operator network.
Incorporating specialized architectures like region proposal networks into our proximal operator learning framework for object detection is an exciting future direction.
We visualize the effect of PPA using the learned proximal operator in \cref{fig:objdetect_gradual_vis}.
Further qualitative results (\cref{fig:objdetect_vis_long}) and details can be found in  \cref{sec:result_objdetect}.

\begin{figure*}[htb]
    \centering
    \vspace*{-0.1in}
    \includegraphics[width=0.60\textwidth]{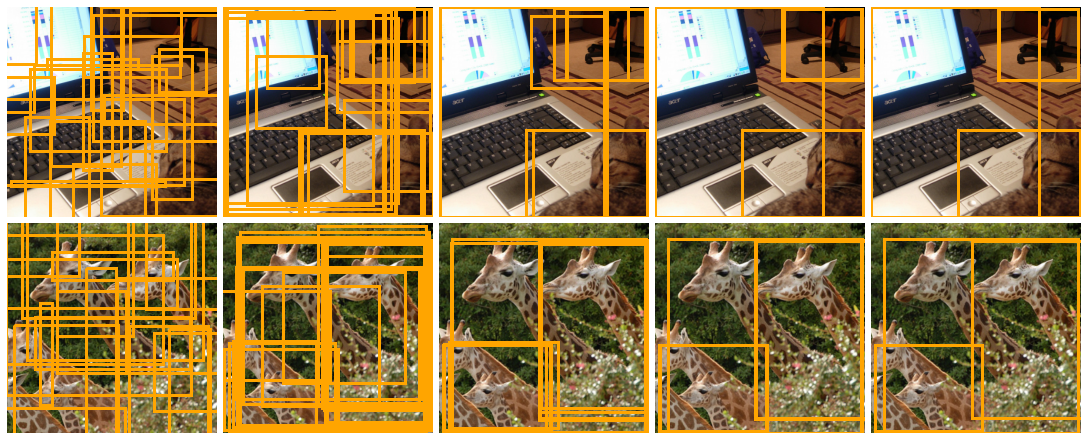}
    \vspace*{-0.1in}
    \caption{First 4 iterations of PPA using the learned proximal operator on 20 randomly initialized boxes (leftmost column).
    Only a few iterations are needed for the boxes to form distinctive clusters.
    }
    \label{fig:objdetect_gradual_vis}
    \vspace*{-0.2in}
\end{figure*}

\section{Conclusion}\label{sec:conclusion}
\vspace*{-0.1in}
Our work provides a straightforward and effective method to learn the proximal operator of MSO problems with varying parameters. 
Iterating the learned operator on randomly initialized points efficiently yields multiple optima to the MSO problems.
Beyond promising results on our benchmark tasks, we see many exciting future directions that will further improve our pipeline.

A current limitation is that at test time the optimal number of iterations to apply the learned operator is not known ahead of time (see end of \cref{sec:app_hyper_param}).
One way to overcome this limitation would be to train another network that estimates when to stop.
This measurement can be the objective itself if the optimum value is known \emph{a priori} (e.g., sampling from level sets) or the gradient norm if objectives are smooth.
One other future direction is to learn a proximal operator that adapts to multiple $\lambda$'s.
This way, the user can easily experiment with different $\lambda$'s and to enable PPA with growing step sizes for super-linear convergence \citep{rockafellar1976monotone, rockafellar2021advances}.
Another direction is to study how much we can relax the assumption that $\varspace$ is a low-dimensional Euclidean space.
Our method could remain effective when $\varspace$ is a low-dimensional submanifold of a high-dimensional Euclidean space.
The challenges would be to constrain the proximal operator to a submanifold and to design a proximal term that is more suitable than the ambient $\ell^2$ norm.

\paragraph{Reproducibility statement.}
The complete source code for all experiments can be found at \url{https://github.com/lingxiaoli94/POL}.
Detailed instructions are given in \texttt{README.md}.
We have further included a tutorial on how to extend the framework to custom problems---see “Extending to custom problems” section where we include a toy physics problem of finding all rest configurations of an elastic spring.
For all our experiments, the important details are provided in the main text, while the remaining details needed to reproduce results exactly are included in the appendix.
\paragraph{Acknowledgements}
We thank Chenyang Yuan for suggesting the rank-2 relaxation of max-cut problems.
The MIT Geometric Data Processing group acknowledges the generous support of Army Research Office grants W911NF2010168 and W911NF2110293, of Air Force Office of Scientific Research award FA9550-19-1-031, of National Science Foundation grants IIS-1838071 and CHS-1955697, from the CSAIL Systems that Learn program, from the MIT–IBM Watson AI Laboratory, from the Toyota–CSAIL Joint Research Center, from a gift from Adobe Systems, and from a Google Research Scholar award.

\bibliography{main}
\bibliographystyle{iclr2023_conference}

\newpage
\appendix

\counterwithin{figure}{section}
\counterwithin{table}{section}
\section{Convergence of Training}\label{sec:global_conv}
We formally state and prove \cref{thm:conv} via the following \cref{prop:conv_loss} and \cref{prop:conv_approx}.
\begin{proposition}\label{prop:conv_loss}
Suppose
\begin{enumerate}
    \item 
    $\Param \subset \RR^r$ for some $r \in \NN_{\ge 1}$;
    \item 
    for any $\param \in \Param$, the objective $f_\param$ is differentiable, $\xi$-weakly convex, and $\nabla f_\param$ is $\zeta$-Lipschitz, i.e.,
    \[
    \|\nabla f_\param(x_1)-\nabla f_\param(x_2)\|_2 \leq \zeta \|x_1-x_2\|_2,
    \]
    with $\xi \le \lambda$.
    \item
    the activation function $\sigma(x)$ used is proper, real analytic, monotonically increasing and 1-Lipschitz, e.g., sigmoid, hyperbolic tangent.
    \end{enumerate} 
    For any $\delta > 0$, $H \ge 2$, $n \in \NN_{\ge 1}$, 
    assume $\Phi$ is an $H$-layer feed-forward neural network with hidden layer sizes $m_1,\ldots,m_H$ satisfying
    \begin{align*}
    m_1,\ldots,m_{H-2} &\ge \Omega(H^2\log(Hn^2/\delta)), \\ 
    m_{H-1} &\ge \Omega(\log(Hn^2/\delta)),\quad m_H \ge \Omega(n).
    \end{align*}
    Let $D$ denote the total number of weights in $\Phi$.
    Then $D = \tilde{\Omega}(n)$. 
    Moreover, there exists a learning rate $\eta \in \RR^D$ such that for any dataset $S = \{(x_i,\param_i)\}_{i=1}^n$ of size $n$ with the training loss $L$ defined as in \eqref{eqn:train_loss}, 
    for any $\epsilon > 0$, with probability at least $1 - \delta$ (over random Gaussian initial weights $\theta^0$ of $\Phi$), there exists $t = O(c_r(\lambda+\zeta)/\epsilon)$ such that 
        $L(\Phi(\cdot, \cdot; \theta^t)) \le L^* + \epsilon,$
    where $\norm{\theta^t}_2^2$ stays bounded, $L^* \defeq \min_{g\in \varspace \times \Param \to \varspace} L(g)$ is the global minimum of the functional $L$,
    $(\theta^k)_{k\in\NN}$ is the sequence generated by gradient descent $\theta^{k+1} \defeq \theta^k - \eta \odot \nabla_{\theta} L(\Phi(\cdot,\cdot; \theta^k))$,
    and $c_r$ depends only on $L$ and the initialization $\theta^0$.
\end{proposition}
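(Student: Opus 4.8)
The plan is to reduce the statement to the abstract global-convergence machinery for over-parameterized feed-forward networks of \citep{kawaguchi2019gradient}, whose hypotheses are exactly a width/initialization/activation condition together with convexity and smoothness of the loss viewed as a function of the network outputs. First I would fix the dataset $S = \{(x_i,\param_i)\}_{i=1}^n$ and define, for each $i$, the per-sample loss $\ell_i(y) \defeq f_{\param_i}(y) + \frac{\lambda}{2}\norm{y - x_i}_2^2$, so that $L(\Phi(\cdot,\cdot;\theta)) = \frac1n \sum_{i=1}^n \ell_i(\Phi(x_i,\param_i;\theta))$. Thus the entire training loss is a sum of scalar functions of the $n$ network outputs, and everything rests on two analytic properties of the $\ell_i$'s that I establish next.

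The two properties are convexity and gradient-Lipschitzness of each $\ell_i$, uniformly in $i$. For convexity, $\xi$-weak convexity of $f_{\param_i}$ means $f_{\param_i} + \frac{\xi}{2}\norm{\cdot}_2^2$ is convex, i.e.\ its generalized Hessian is $\succeq -\xi I$; completing the square gives $\nabla^2 \ell_i(y) \succeq (\lambda - \xi) I$, which is positive semidefinite under the hypothesis $\xi \le \lambda$, so each $\ell_i$ is convex (and $(\lambda-\xi)$-strongly convex when the inequality is strict). For smoothness, $\nabla \ell_i(y) = \nabla f_{\param_i}(y) + \lambda(y - x_i)$, and since $\nabla f_{\param_i}$ is $\zeta$-Lipschitz, $\nabla \ell_i$ is $(\zeta + \lambda)$-Lipschitz. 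These bounds hold for every $i$ with a common smoothness constant $\beta \defeq \zeta + \lambda$ and a common convexity modulus, which is precisely the input the cited framework requires.

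With these in hand I would invoke the main result of \citep{kawaguchi2019gradient}. Under the stated widths $m_1,\ldots,m_{H-2} \ge \Omega(H^2\log(Hn^2/\delta))$, $m_{H-1} \ge \Omega(\log(Hn^2/\delta))$, $m_H \ge \Omega(n)$, the real-analytic, monotone, $1$-Lipschitz activation, and Gaussian initialization, their analysis shows that with probability at least $1-\delta$ the last-hidden-layer feature matrix evaluated on the $n$ training inputs has full rank $n$, so the last-layer weights can realize arbitrary output configurations and the effective optimization over outputs is convex and unconstrained. Gradient descent on $\theta$ with a suitable per-coordinate learning rate $\eta$ then drives this convex-in-outputs objective to within $\epsilon$ of its infimum, which coincides with $L^* = \min_{g} L(g)$; this yields $L(\Phi(\cdot,\cdot;\theta^t)) \le L^* + \epsilon$ in $t = O(c_r \beta / \epsilon) = O(c_r(\lambda+\zeta)/\epsilon)$ iterations, with the smoothness constant $\beta = \zeta + \lambda$ appearing explicitly and $c_r$ absorbing the dimension-$r$, landscape, and initialization constants. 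Summing the widths gives $D = \tilde\Omega(n)$, and boundedness of $\norm{\theta^t}_2^2$ follows from the descent argument itself.

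The hard part will be checking that the cited theorem, stated for a single fixed convex loss $\ell(\cdot, y_i)$ attached to labels, applies verbatim to our index-dependent losses $\ell_i(\cdot)$, which differ across samples through $\param_i$ and $x_i$. I would verify that the convergence proof uses only per-sample convexity together with a uniform smoothness bound, never a shared functional form, so that substituting $\ell_i$ with modulus $\lambda-\xi$ and constant $\zeta+\lambda$ leaves the argument intact. A secondary subtlety is that $\Phi$ maps into the bounded set $\varspace$ rather than all of $\RR^d$; I would argue that because $L^*$ is realized by the true proximal operator, whose outputs already lie in $\varspace$, the unconstrained over-parameterization analysis still identifies the correct global value and no projection is needed for the convergence claim.
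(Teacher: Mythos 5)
Your proposal is correct and takes essentially the same route as the paper: both define the per-sample losses $\ell_i(y) = f_{\param_i}(y) + \frac{\lambda}{2}\norm{y-x_i}_2^2$, verify that $\xi$-weak convexity with $\xi \le \lambda$ makes each $\ell_i$ convex and that $\nabla\ell_i$ is $(\lambda+\zeta)$-Lipschitz, and then invoke Theorem 1 of \citet{kawaguchi2019gradient} with the stated layer-width bounds. The only step the paper handles that you do not mention is that the cited theorem assumes unit-norm inputs, which is dispatched by uniformly rescaling $(x_i,\param_i)$ into the unit ball and padding one coordinate --- a routine detail.
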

\begin{proof}[Proof of \cref{prop:conv_loss}]
The theorem is an application of Theorem 1 in \citet{kawaguchi2019gradient} with the following modifications.

For $i \in [n]$, define $\ell_i(x) \defeq f_{\param_i}(x) + \frac{\lambda}{2}\norm{x-x_i}_2^2$.
To check Assumption 1 of \citet{kawaguchi2019gradient}, observe
\begin{align*}
    \nabla_x \ell_i(x) &= \nabla f_{\param_i}(x) + \lambda(x-x_i), \\
    \nabla^2_x \ell_i(x) &= \nabla^2 f_{\param_i}(x) + \lambda I_d.
\end{align*}
Hence the assumption that $f_{\param_i}$ is $\xi$-weakly convex implies that 
\[\nabla^2 f_{\param_i}(x) + \lambda I_d \succcurlyeq \nabla^2 f_{\param_i}(x) + \xi I_d \succcurlyeq 0. \] 
Hence $\ell_i$ is convex.
The assumption that $\nabla f_{\param_i}$ is $\zeta$-Lipschitz implies, for any $x_1, x_2 \in \varspace \times \Param$,
\begin{align*}
\norm{\nabla \ell_i(x_1) - \nabla \ell_i(x_2)}_2 &= \norm{\nabla f_{\param_i}(x_1) - \nabla f_{\param_i}(x_2) + \lambda(x_2-x_1)}_2 \\
&\le \norm{\nabla f_{\param_i}(x_1) - \nabla f_{\param_i}(x_2)}_2 + \lambda \norm{x_1-x_2}_2 \\
&\le (\lambda + \zeta) \norm{x_1-x_2}_2.
\end{align*}
Hence $\nabla \ell_i$ is $(\lambda + \zeta)$-Lipschitz.

An input vector to the neural network $\Phi$ is the concatenation $(x,\param) \in \RR^{d+r}$.
\citet{kawaguchi2019gradient} assume that the input data points are normalized to have unit length.
This is not an issue, as we can scale down $(x_i, \param_i)$ uniformly to be contained in a unit ball, then pad $\param_i$ one extra coordinate to make $\norm{(x_i, \param_i)}_2=1$ for all $i \in [n]$, similar to the argument given in the footnotes before Assumption 2.1 of \citet{allen2019convergence}.

Lastly, we mention explicitly lower bounds for the layer sizes that are used in the proof of Theorem 1 of \citet{kawaguchi2019gradient} (see the paragraph below Lemma 3), instead of stating a single bound on the total number of weights in the statement of Theorem 1.
This is because Theorem 1 only states that there \emph{exists} a network of size $\tilde{\Omega}(n)$ for which training converges, whereas \emph{every} network satisfying the layer-wise bounds will have the same convergence guarantee.
\end{proof}

Next we show that once the training loss is $\epsilon$ away from the global minimum, we can guarantee that the approximation error on the training data in the mean-squared sense is small: i.e., the learned operator $\Phi(\cdot,\cdot;\theta)$ is close to the true proximal operator \eqref{eqn:prox_op}.
\begin{proposition}\label{prop:conv_approx}
Suppose for any $\param \in \Param$, the objective $f_\param$ is differentiable and $\xi$-weakly convex with $\xi < \lambda$, where $\lambda$ is the proximal regularization weight of the training loss $L(g)$ defined in \eqref{eqn:train_loss}.
Let $\theta$ be the weight of the network $\Phi$ such that $L(\Phi(\cdot,\cdot;\theta)) \le L^* + \epsilon$ where 
$L^* \defeq \min_{g\in \varspace \times \Param \to \varspace} L(g)$ is the global minimum of the functional $L$.
Let $\prox(\cdot;\cdot)$ be the true proximal operator defined in \eqref{eqn:prox_op}.
Then the mean-squared error on the training data is bounded by
\begin{align}
\frac{1}{n}\sum_{i=1}^n \norm{\Phi(x_i,\tau_i;\theta) - \prox(x_i;\tau_i)}_2^2 \le \frac{2\epsilon}{\lambda - \xi}.
\end{align}
\end{proposition}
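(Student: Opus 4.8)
The plan is to exploit the strong convexity that the proximal regularizer injects into each per-sample objective. For $i \in \{1,\dots,n\}$, set $\ell_i(y) \defeq f_{\tau_i}(y) + \frac{\lambda}{2}\norm{y-x_i}_2^2$, exactly as in the proof of \cref{prop:conv_loss}. Since $f_{\tau_i}$ is $\xi$-weakly convex, we have $\nabla^2 \ell_i(y) = \nabla^2 f_{\tau_i}(y) + \lambda I_d \succcurlyeq (\lambda-\xi) I_d$, so under the standing hypothesis $\xi < \lambda$ each $\ell_i$ is $(\lambda-\xi)$-strongly convex. Its unique minimizer is, by the definition \eqref{eqn:prox_op}, precisely $y_i^\star \defeq \prox(x_i;\tau_i)$, and at this point $\nabla \ell_i(y_i^\star) = 0$.

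Next I would identify the global functional minimum $L^*$. Because $L(g) = \frac{1}{n}\sum_{i=1}^n \ell_i(g(x_i,\tau_i))$ depends on $g$ only through its values at the finitely many training points, minimizing over all functions $g:\varspace\times\Param\to\varspace$ decouples into independent pointwise minimizations; invoking uniqueness of each minimizer to resolve any coincident pairs $(x_i,\tau_i)$, this gives $L^* = \frac{1}{n}\sum_{i=1}^n \ell_i(y_i^\star)$, the average of the pointwise optimal values.

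Finally, writing $z_i \defeq \Phi(x_i,\tau_i;\theta)$ and applying the quadratic lower bound from $(\lambda-\xi)$-strong convexity together with the stationarity $\nabla \ell_i(y_i^\star)=0$ yields $\ell_i(z_i) - \ell_i(y_i^\star) \ge \frac{\lambda-\xi}{2}\norm{z_i - y_i^\star}_2^2$ for each $i$. Averaging over $i$ and recognizing the left-hand side as $L(\Phi(\cdot,\cdot;\theta)) - L^* \le \epsilon$ produces $\frac{1}{n}\sum_{i=1}^n \norm{z_i - y_i^\star}_2^2 \le \frac{2\epsilon}{\lambda-\xi}$, which is exactly the claimed bound. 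The analytic core is thus a one-line consequence of strong convexity; I expect the only delicate point to be the decoupling step, where one must confirm that the functional minimizer genuinely attains the per-point proximal values---in particular that each $\prox(x_i;\tau_i)$ lies in the codomain $\varspace$---so that $L^*$ equals the sum of pointwise minima rather than merely dominating it. Everything else follows directly.
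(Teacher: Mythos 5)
Your proof is correct and follows essentially the same route as the paper's: it introduces the per-sample strongly convex functions $\ell_i$ (the paper's $h_i$), identifies $L^*$ with the loss of the true proximal operator, and applies the $(\lambda-\xi)$-strong-convexity lower bound at the stationary point $\prox(x_i;\tau_i)$ before averaging. Your added remark about confirming that each $\prox(x_i;\tau_i)$ lies in $\varspace$ is a reasonable point of care that the paper passes over with ``Clearly $L^* = L(\prox(\cdot;\cdot))$,'' but it does not change the argument.
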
 
\begin{proof}
Clearly $L^* = L(\prox(\cdot;\cdot))$, i.e., the minimum of $L$ is achieved with the true proximal operator.
Define $h_i:\varspace \to \RR$ by $h_i(x) \defeq f_{\param_i}(x) + \frac{\lambda}{2}\norm{x - x_i}_2^2$, so that we can write $L(g) = \frac{1}{n}\sum_{i=1}^n h_i(g(x_i,\tau_i)).$
By the assumption on weak convexity, each $h_i$ is $(\lambda - \xi)$-strongly convex. This implies for any $x, y \in \varspace$, 
\begin{align}
    h_i(x) \ge h_i(y) + \nabla h_i(y)^\top (x-y) + \frac{\lambda - \xi}{2}\norm{x-y}^2_2. \label{eqn:reverse_lipschitz}
\end{align}
The minimum of $h_i$ is achieved at $\prox(x_i;\param_i)$ by the definition of $\prox$.
Differentiability and convexity imply $\nabla h_i(\prox(x_i;\param_i)) = 0$.
Hence setting $y = \prox(x_i; \param_i)$ in \eqref{eqn:reverse_lipschitz} implies, for any $x \in \varspace$,
\begin{align*}
    h_i(x) - h_i(\prox(x_i;\param_i)) \ge \frac{\lambda - \xi}{2}\norm{x-\prox(x_i;\param_i)}^2_2.
\end{align*}
Now by the definition of \eqref{eqn:train_loss},
\begin{align*}
\epsilon &\ge L(\Phi(\cdot,\cdot;\theta)) - L^* = L(\Phi(\cdot,\cdot;\theta)) - L(\prox(\cdot,\cdot))\\
    &= \frac{1}{n} \sum_{i=1}^n \left[h_i(\Phi(x_i,\tau_i;\theta)) - h_i(\prox(x_i;\tau_i)) \right] \\
    &\ge \frac{1}{n}\sum_{i=1}^n \frac{\lambda - \xi}{2} \norm{\Phi(x_i,\tau_i;\theta) - \prox(x_i;\tau_i)}_2^2.
\end{align*}
Rearranging terms we obtain the desired result.
\end{proof}

\section{Network Architectures}\label{sec:nn_arch}
\setcounter{figure}{0}
\setcounter{table}{0}
The network architecture we use to parameterize the operators for both $\POL$ and $\GOL$ is identical and is shown in \cref{fig:nn_arch}.
\begin{figure}[htb]
    \centering
    \includegraphics[width=0.8\linewidth, trim=0 0.0in 0 0, clip]{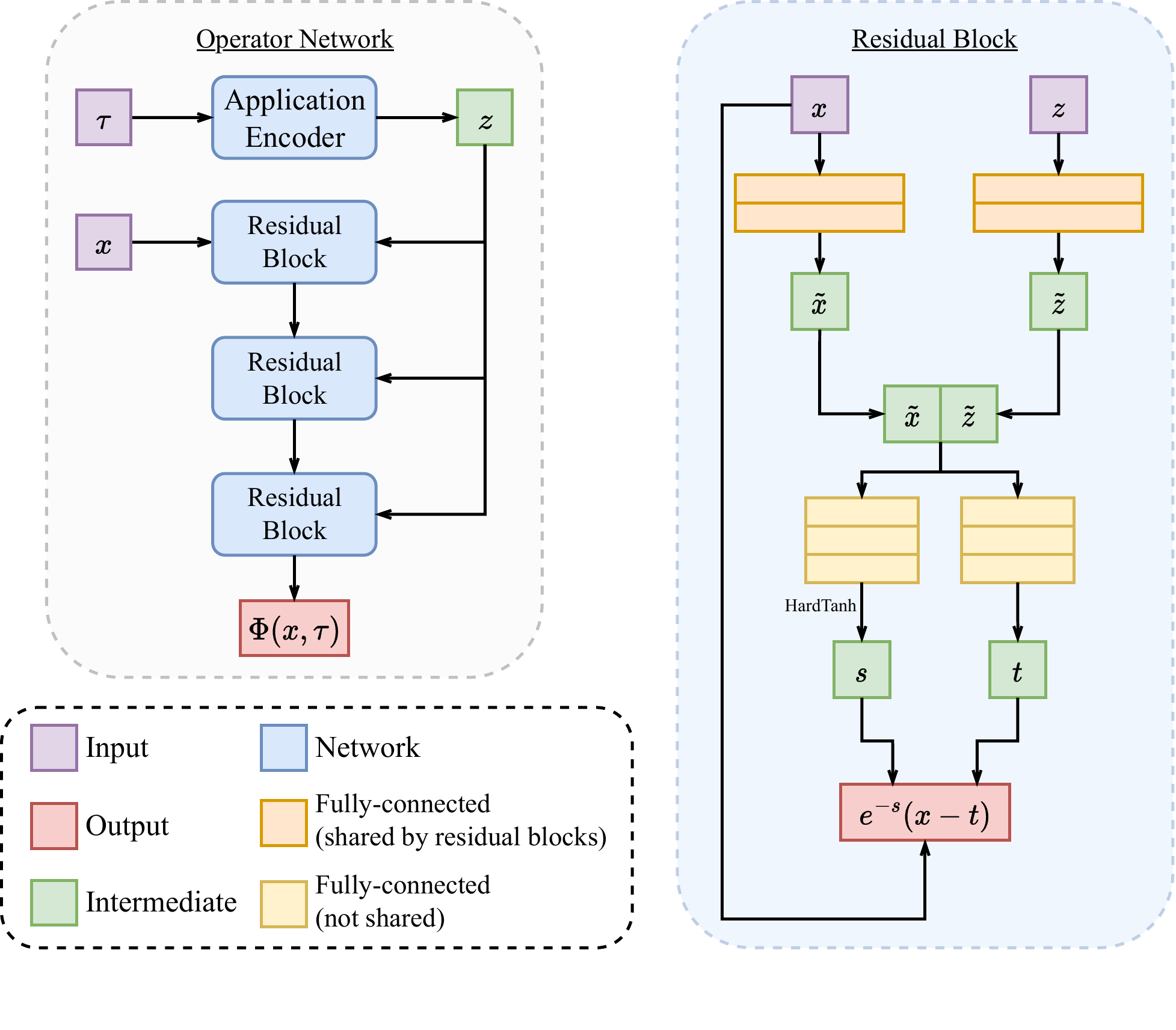}
    \caption{
    Network architecture for the operators used in $\POL$ and $\GOL$.
    We use ReLU as the activation after all intermediate linear layers, except for predicting the scaling in the residual block, where we use HardTanh to ensure $s \in [-2,2]$.
    For the shared 2-layer fully-connected network, the hidden layer sizes are $256, 128$.
    For the 3-layer fully-connected network in each residual block, the hidden layer sizes are $128, 128, 128$.
    }
    \label{fig:nn_arch}
\end{figure}
The encoder of $\param$ will be chosen depending on the application. For our conic section (\ref{sec:app_level_sets}), sparse recovery (\ref{sec:app_sparse_recovery}), and max-cut (\ref{sec:app_max_cut}) benchmarks, the encoder is just the identity map. For symmetry detection (\ref{sec:app_symdetect}), $\param$ is a point cloud and we use DGCNN \citep{wang2019dynamic}. 
For object detection (\ref{sec:app_objdetect}), we use ResNet-50 \citep{he2016deep}.
Inspired by \citet{dinh2016density}, we include both additive and multiplicative coupling in the residual blocks.
At the same time, since we do not need bijectivity of the operator (and proximal operators should not be) nor access to the determinant of the Jacobian, we do not restrict ourselves to a map with triangular structure as in \citet{dinh2016density}.
We use 3 residual blocks for all applications, except for symmetry detection where we use 5 blocks which give slightly improved performance.

Our architecture is economical: the model size (excluding the application-specific encoder) is under $2$MB for all applications we consider.
This also makes iterating the operators fast at test time. Note that the application-specific encoder only needs to be run once at each test time $\param$ as the encoded vector $z$ can be reused (\cref{fig:nn_arch}).

\section{Importance Sampling via Unfolding PPA}\label{sec:importance_sample}
Directly optimizing \eqref{eqn:direct_prox_learn} or \eqref{eqn:gol} using mini-batching may not yield an operator that can refine a near-optimal solution, if $\mu$ is taken to be $\unif(\varspace)$, the uniform measure on $\varspace$ (more precisely, the $d$-dimensional Lebesgue measure restricted to $\varspace$ and normalized to a probability distribution).
Instead, we would like to sample from a distribution that puts more probability density on near-optimal solutions. 
We achieve this goal as follows, inspired by \citet{Wang2019PRNet}.
Let $\Phi$ denote the network with weights after $t$ training iterations.
For $k \in \NN_{\ge 0}$, denote $\mu^k \defeq (\Phi^k)_\#(\unif(\varspace))$.
For a fixed $K \in \NN_{\ge 1}$, we set $\mu \defeq \frac{1}{K+1} \sum_{k=0}^K \mu^k$.
Then, for training iteration $t+1$, we optimize the objective \eqref{eqn:direct_prox_learn} or \eqref{eqn:gol} with the constructed $\mu$.
Note this modification does not introduce any bias for $\POL$ (similarly for $\GOL$), in the sense that the optimal solution to \eqref{eqn:direct_prox_learn} is still the true proximal operator since $\mu$ has full support, yet it puts more density in near-optimal regions as $t$ increases. 
In practice, we choose $K=5$ or $K=10$.
For the choice of other hyper-parameters, see \cref{sec:app_hyper_param}.

\section{Detailed Results}\label{sec:app_sec}
\setcounter{figure}{0}
\setcounter{table}{0}
\subsection{Hyper-parameters}\label{sec:app_hyper_param}
Unless mentioned otherwise, the following hyper-parameters are used.

In each training iteration of $\POL$ and $\GOL$, we sample $32$ problem parameters from the training dataset of $\Param$, and $256$ of $x$'s from $\unif(\varspace)$ when computing \eqref{eqn:direct_prox_learn} or \eqref{eqn:gol} using the importance sampling trick in \cref{sec:importance_sample}.
The learning rate of the operator is kept at $10^{-4}$ for both $\POL$ and $\GOL$, and by default we train the operator network for $2\times 10^{5}$ iterations.
This is sufficient for the loss to converge for both $\POL$ and $\GOL$ in most cases.
Since $\GOL$ requires multiple evaluations of the gradient of the objective, it typically trains two or more times slower than $\POL$.
For the proximal weight $\lambda$ of $\POL$, we choose it based on the scale of the objective and the dimension of $\varspace$; see \cref{tab:prox_weight_choose}.
All training is done on a single NVIDIA RTX 3090 GPU.
\begin{table}[]
    \centering
    \caption{Choices of $\lambda$ for all applications considered. $\varspace$ is the search space of solutions, and $d$ is the dimension of the Euclidean space where we embed $\varspace$ (so it might be greater than the intrinsic dimension of $\varspace$).}
    \vskip 0.1in
    \begin{tabular}{c|c|c|c}
        Application & $\varspace$ & $d$ & $\lambda$\\\hline
        conic section (\ref{sec:app_level_sets}) & $[-5,5]^2$ & $2$ & $0.1$ \\\hline
        sparse recovery (\ref{sec:app_sparse_recovery}) & $[-2,2]^8$ & $8$ & $10.0$ \\\hline
        max-cut (\ref{sec:app_max_cut}) & $(S^1)^8$ & $16$ & $10.0$ \\\hline
        symmetry detection (\ref{sec:app_symdetect}) & $S^2 \times \RR_{\ge 0}$ & $4$ & $1.0$ \\\hline
        object detection (\ref{sec:app_objdetect}) & $[0,1]^4$ & $4$ & $1.0$ \\
    \end{tabular}
    \label{tab:prox_weight_choose}
    \vskip -0.1in
\end{table}

For the step size $\eta$ in $\GOL$, we start with $\nicefrac{1}{\lambda}$ (so same step size as $\POL$ in the forward/backward Euler sense) and then slowly increase it (so fewer iterations are needed for convergence) without degrading the metrics.
When evaluating \eqref{eqn:gol}, we set $Q=10$ in all experiments except for symmetry detection, where we use $Q = 1$ because otherwise the training will take $> 200$ hours.
For $\PD$, we choose a step size small enough so as to not miss significant minima and a sufficient number of iterations for the loss (i.e. the objectives) to fully converge.

For evaluation, the number of iterations to apply the trained operators is chosen to be enough so that the objective converges.
This number will be chosen separately for each application and method. 
By default, $1024$ solutions are extracted from each method, and $1024$ witnesses are sampled to compute $\WD_t$ and $\WP_t^\delta$, averaged over test dataset and over $10$ trials with standard deviation provided (in most cases the standard deviation is two orders of magnitude smaller than the metrics).
We filter out solutions that do not lie in $\varspace$.

A limitation for both $\POL$ and $\GOL$ is that when the solution set is continuous, too many applications of the learned operator can cause the solutions to collapse.
We suspect this is because even with the importance sampling trick (\cref{sec:importance_sample}), during training the operators may never see enough input that are near-optimal to learn the correct refinement needed to recover the continuous solution set.
A future direction is to have another network to predict a confidence score for each $x \in \varspace$ so that at test time the user knows when to stop iterating the operator, e.g., when the objective value and its gradient are small enough; see the discussion in \cref{sec:conclusion}.

\subsection{Convergence to the Proximal Operator}\label{sec:conv_prox_l1}
To empirically verify \cref{prop:conv_approx}, that our method can faithfully approximate the true proximal operators of the objectives, we conduct the following simple experiments.
We consider the function $f(x) = \norm{x}_1$ for $x \in \varspace = [-1,1]^d$ and treat $\Param$ as a singleton.
Its proximal operator $\prox(x) = \argmin_y \norm{y}_1 + \norm{y-x}_2^2$ is known in closed form as the shrinkage operation, defined coordinate-wise as: 
\begin{equation}\label{eqn:shrinkage}
\prox(x)_i = \left\{
\begin{array}{cc}
    x_i - 1/2 & x_i \ge 1/2 \\
    0 & |x_i| \le 1/2 \\
    x_i + 1/2 & x_i \le -1/2.
\end{array}
\right.
\end{equation}
For each dimension of $d=2,4,8,16,32$, we train an operator network $\Phi$ (\cref{fig:nn_arch}) using \eqref{eqn:direct_prox_learn} as the loss with learning rate $10^{-3}$.
\cref{fig:convergence_l1norm} shows the mean-squared-error $\norm{\Phi(x) - \Phi^*(x)}_2^2$ scaled by $\nicefrac{1}{d}$ and averaged over $1024$ samples vs. the training iterations, where $\Phi^*$ is the shrinkage operation \eqref{eqn:shrinkage}.
We see that the trained operator indeed converges to $\Phi^*$ as predicted by \cref{prop:conv_approx}, and the convergence speed is faster in smaller dimensions.

\begin{figure}[t]
    \centering
    \includegraphics[width=0.6\textwidth]{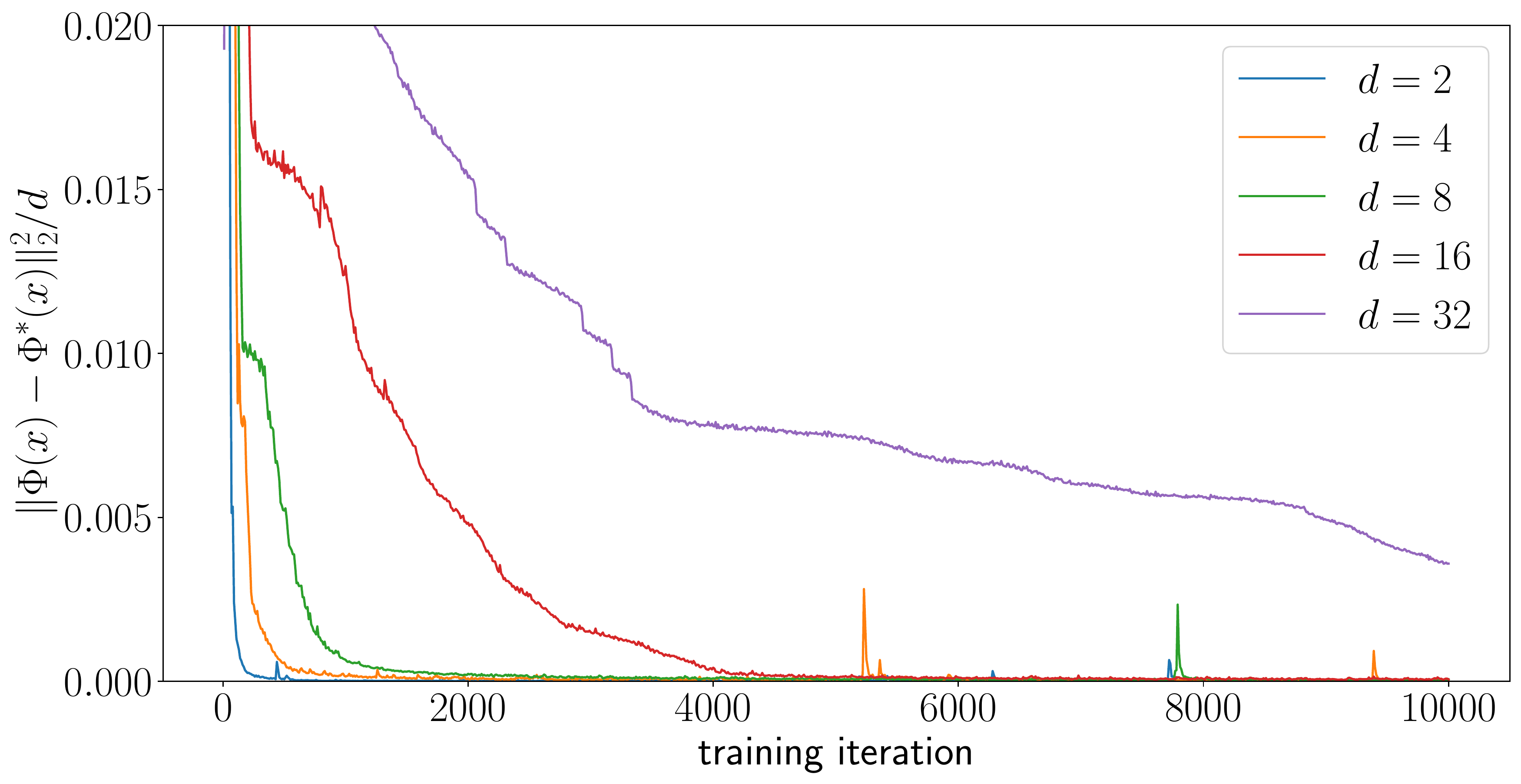}
    \caption{Convergence to the true proximal operator of $f(x) = \norm{x}_1$.}
    \label{fig:convergence_l1norm}
\end{figure}

\subsection{Effect of the Proximal Term}\label{sec:effect_prox}
In this section, we study the necessity of the proximal term $\nicefrac{\lambda}{2} \norm{\Phi(x, \param) - x}^2$ in \eqref{eqn:direct_prox_learn}.
Without such a term, the learned operator can degenerate.
For example, consider (1) in \citep{chen2021learning}, which minimizes $\min_\Phi \sum_{t=1}^T w_t f(x_t)$ with $x_{t+1} = x_t - \Phi(x_t, \nabla f(x_t),\ldots)$ for all $t$ (with adapted notation). 
Suppose $x^*$ is one global optimum of $f$ but is not the only one.
Then $\Phi(x,\ldots) := x - x^*$ clearly minimizes the objective, yet the update steps will always set $x_t = x^*$ regardless of the initial positions.

To further illustrate the effect of different choices of $\lambda$, consider the 2D cosine function $f(x) = -\sum_{i=1}^2 10 \cos(2\pi x_i)$ for $x \in \varspace = [-5,5]^2$ and a singleton $\Param$.
This function is $\xi$-weakly convex with $\xi = 40\pi^2 < 400$ and has global minima forming a grid (all local minima are global minima).
On the left of \cref{fig:rastrigin}, we see that when $\lambda = 400 > \xi$---in which case the condition of \cref{thm:conv} is met---$\POL$ recovers all optima. In comparison, for $\lambda = 10$, the outer ring of solutions is missing, and with $\lambda = 0, 1$ most optima are missing in the grid.

To demonstrate how existing L2O methods can fail to recover multiple solutions, we conduct the same experiment on the L2O particle-swarm method by \citet{cao2019learning}, which recovers a swarm of particles that are close to optima.
We use the default parameters in the provided source code except changing the objective to the 2D cosine function and the standard deviation of the initial random particles to 1.
As the method by \citet{cao2019learning} could produce particles outside $\varspace = [-5,5]^2$, we add an additional term $0.01\norm{x}^2$ to the objective $f(x)$; without such a term the particle swarm simply collapses to a single point far away from the origin.
The results are shown on the right of \cref{fig:rastrigin}.
We see that even with 256 independent random starts and with population size $4$, this method fails to recover most of the optima, in particular in non-positive quadrants.

\begin{figure}[t]
    \centering
    \includegraphics[width=0.4\textwidth]{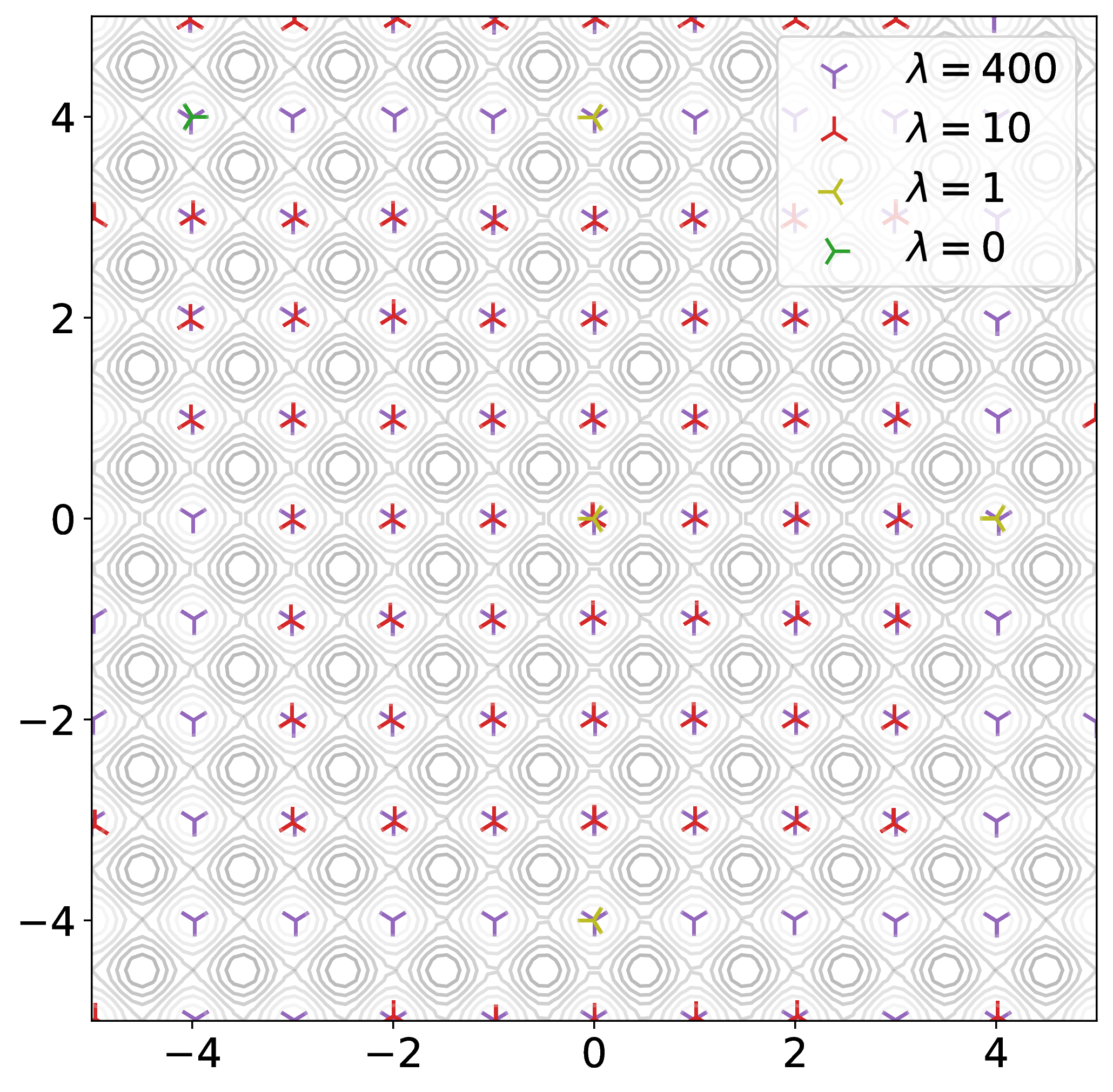}
    \includegraphics[width=0.4\textwidth]{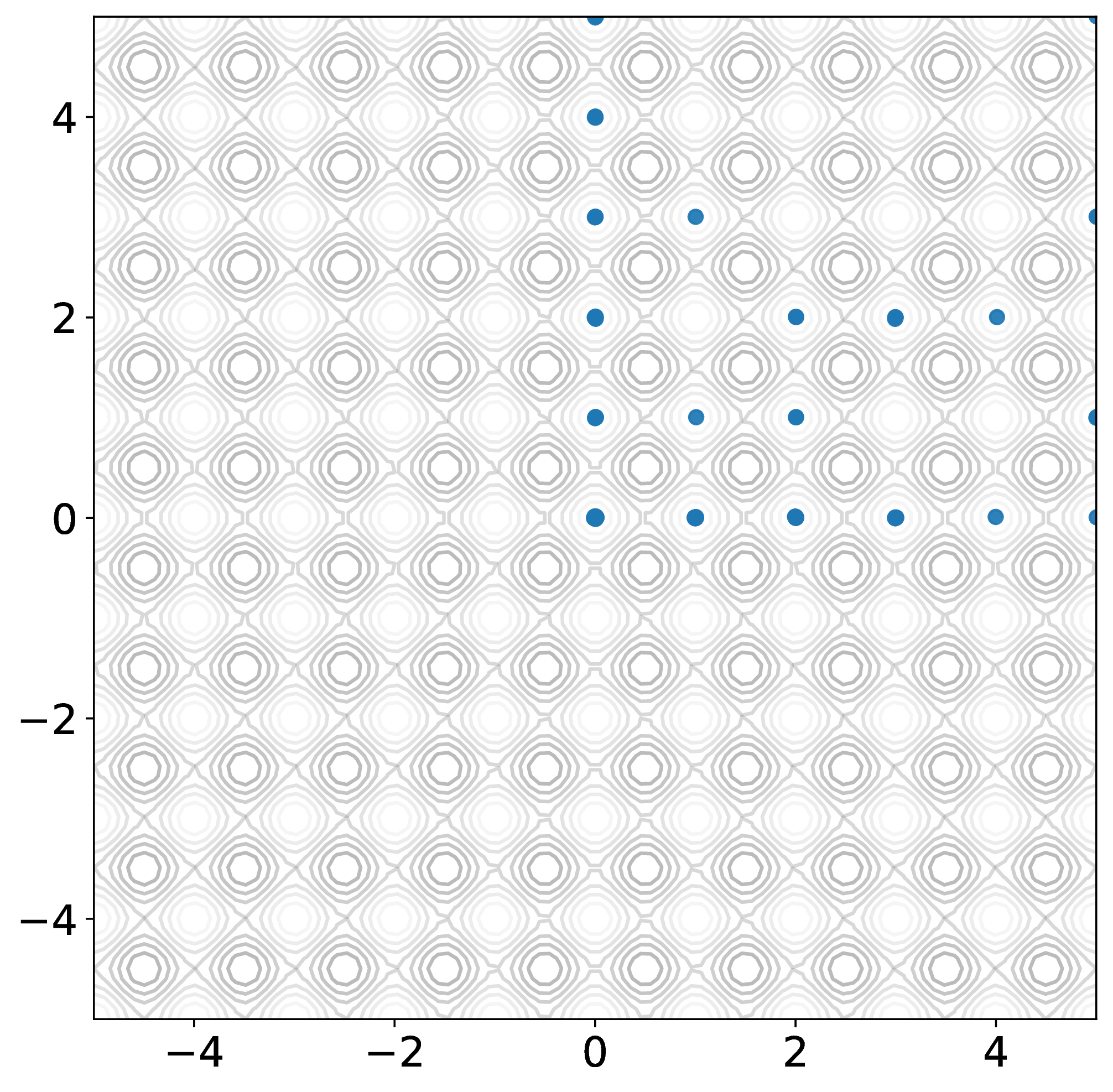}
    \caption{Left: the result of $\POL$ after 10 iterations with $\lambda = 0, 1, 10, 400$ for the 2D cosine function which has weakly convex constant $\xi = 40\pi^2 < 400$.
Right: particle swarms recovered by \citet{cao2019learning} for after 10 iterations from 256 independent runs. 
The population size of the swarm is 4 (default value in their source code). 
}
    \label{fig:rastrigin}
\end{figure}

\subsection{Sampling from Conic Sections}\label{sec:result_level_sets}
\textbf{Setup.} 
For this problem, the training dataset contains $2^{20}$ samples of $\param \in \Param$, while the test dataset has size $256$.
In our implementation and similarly in other benchmarks we do not store the dataset on disk, but instead generate them on the fly with fixed randomness.
The $\param$'s are sampled uniformly in $\Param$.
$\PD$ is run for $5 \times 10^4$ steps with learning rate $1.0$.
For step sizes, we choose $\lambda = 0.1$ for $\POL$ and $\eta = 1.0$ for $\GOL$.
We found that the training of $\GOL$ explodes when $\eta > 1.0$.
Meanwhile, $\POL$ is able to take bigger ($\nicefrac{1}{\lambda}=10.0$) steps while staying stable during training (but might fail to recover solutions due to large step size).
To obtain solutions, we use $5$ iterations for $\POL$, while for $\GOL$ we use $100$ iterations since it converges slower (and more iterations won't improve the results).

\textbf{Results.} 
We visualize for the conic section problem in \cref{fig:conic_vis} for 16 randomly chosen $\param\in\Param$.
In \cref{fig:conic_wp} we plot of $\delta$ vs. $\WP_t^\delta$ \eqref{eqn:witness_metrics} to quantitatively verify how good $\POL$ and $\GOL$ are at recovering the level sets, where we treat the results by $\PD$ as the ground truth.
Both visually and quantitatively, we see that $\POL$ outperforms $\GOL$.
\cref{fig:conic_conv} compares the convergence speed when applying the learned iterative operators at test time: clearly $\POL$ converges much faster.

\begin{figure}[h]
    \centering
    \begin{small}
    \includegraphics[width=0.95\textwidth]{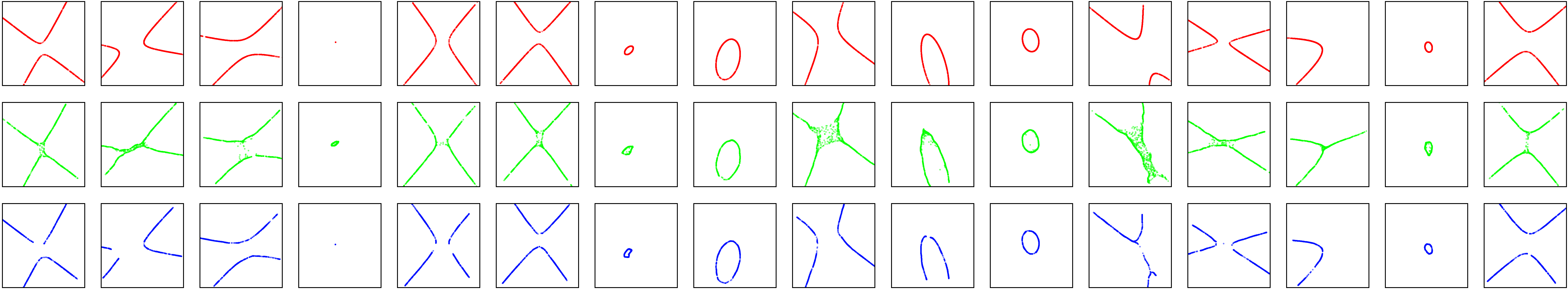}
    \caption{ 
    Visualization of the solutions for the conic section problem.
    {\color{red} Red} indicates the solutions by $\PD$ which we treat as ground truth.
    {\color{green} Green} and {\color{blue} blue} indicate the solutions by $\GOL$ (\cref{sec:metrics}) and $\POL$ (proposed method) respectively.
    }
    \label{fig:conic_vis}
    \end{small}
\end{figure}

\begin{figure}[h]
    \centering
    \begin{small}
    \includegraphics[width=0.9\textwidth]{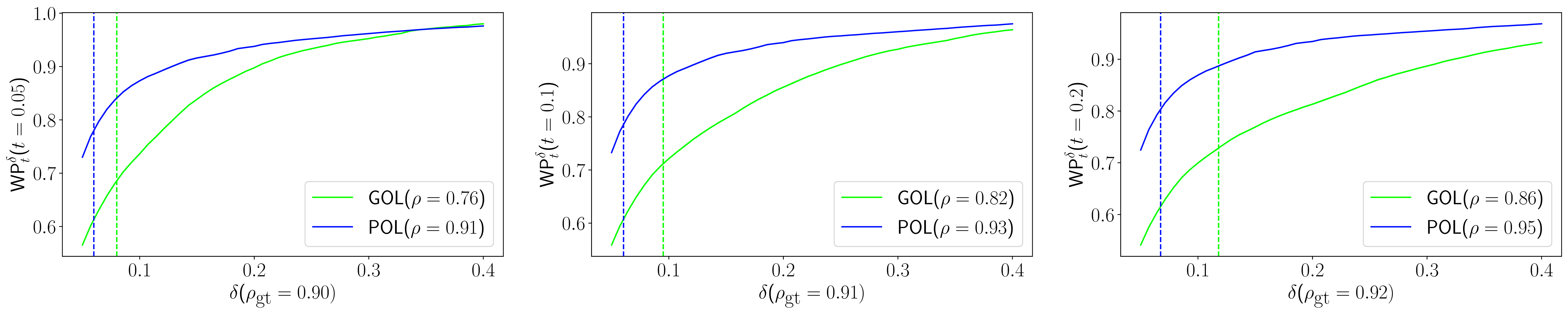}
    \caption{ 
    The plot of $\delta$ vs. $\WP_t^\delta$ for the conic section problem ($t = 0.05, 0.1, 0.2$).
    The vertical dashed line indicates $\WD_t$.
    $50$ equally spaced $\delta$ values are used to draw the plot.
    Here $\rho_{\textrm{gt}}$ indicates the percentage of $\PD$ solutions that have objectives $\le t$, and $\rho$ similarly indicates the percentage of solutions for each method with objectives below $t$.
    We sample $1024$ witnesses to compute $\WP_t^\delta$, averaged over $256$ test problem instances. 
    The plot is averaged over $10$ trials of witness sampling (the fill-in region's width indicates the standard deviation).
    Here the standard deviations are all less than $10^{-3}$ so the fill-in regions are too small to be visible.
    }
    \label{fig:conic_wp}
    \end{small}
\end{figure}

\begin{figure}[h]
    \centering
    \begin{small}
    \includegraphics[width=0.9\textwidth]{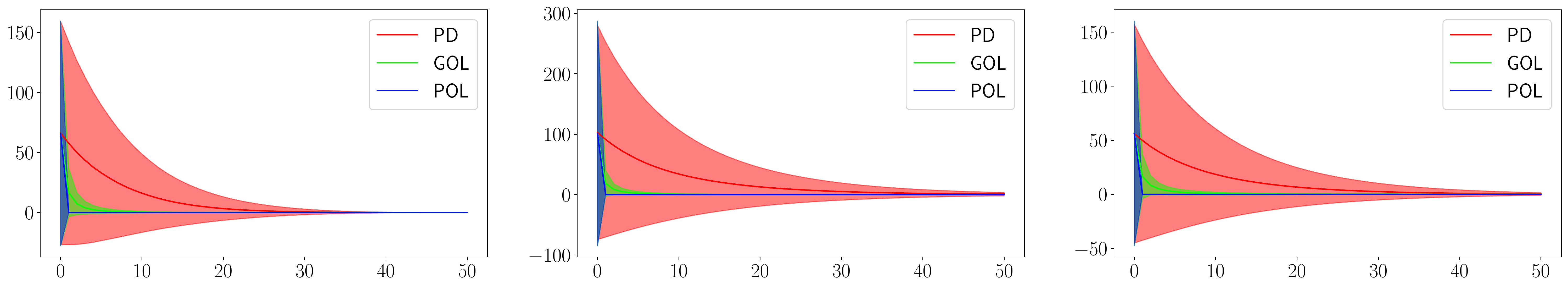}
    \caption{
      Convergence speed comparison at test time for the conic section problem.
      For $\POL$ and $\GOL$, the $x$-axis is the number of iterations used.
      For $\PD$, the $x$-axis is the number of gradient descent steps, multiplied by $100$.
      The horizontal axis shows the number of iterations, and the vertical axis shows the value of $f_\param(x)$, averaged over all current solutions (fill-in region's width indicates standard deviation).
      The three plots shown correspond to the problem instances in the first three columns in \cref{fig:conic_vis}.
      Once the operator has been trained, $\POL$ converges in less than 5 steps, while $\GOL$ converges slower ($\GOL$ is already trained with the largest step size without causing training to explode).
    }
    \label{fig:conic_conv}
    \end{small}
\end{figure}

\subsection{Non-Convex Sparse Recovery}\label{sec:result_sparse_recovery}
\textbf{Setup.} 
For this problem, the training dataset contains $1024$ samples of $\param = (\alpha, p)$,  while the test dataset has $128$ samples.
The $\param$'s are sampled uniformly in $\Param = [0,1] \times [0.2, 0.5]$.
We extract $4096$ solutions from each method after training.
For $\PD$, we run $5\times 10^{5}$ steps of gradient step with learning rate $10^{-5}$.
We found that due to the highly nonconvex landscape of the problem, bigger learning rates will cause $\PD$ to miss significant local minima.
For step sizes, we choose $\lambda=10$ for $\POL$ (so this corresponds to step size $0.1$ for backward Euler) and $\eta = 0.1$ for $\GOL$.
To obtain solutions, $\POL$ requires less than $20$ iterations to converge, while for $\GOL$ over $100$ iterations are needed.

\textbf{Results.} 
We show the histogram of the solutions' objective values for $\PD$, $\GOL$, and $\POL$ in \cref{fig:mixed_ncvx_hist} for 4 problem instances.
\cref{fig:mixed_ncvx_vis} visualizes the solutions for 8 problem instances projected onto the last two coordinates.
 $\GOL$ fails badly in all instances.
Remarkably, despite the non-convexity of the problem and the much larger step size ($0.1$ compared to $10^{-5}$), $\POL$ yields solutions on par or better than $\PD$ when $p$ is small.
For instance, for the second and third columns in \cref{fig:mixed_ncvx_vis} (corresponding to second and third columns in \cref{fig:mixed_ncvx_hist}), $\PD$ (in {\color{red} red}) misses near-optimal solutions that $\POL$ (in {\color{blue} blue}) captures.
As such the results of $\PD$ can be suboptimal, so we do not compute witness metrics here.

\begin{figure}[h]
    \centering
    \begin{small}
    \includegraphics[width=0.9\textwidth]{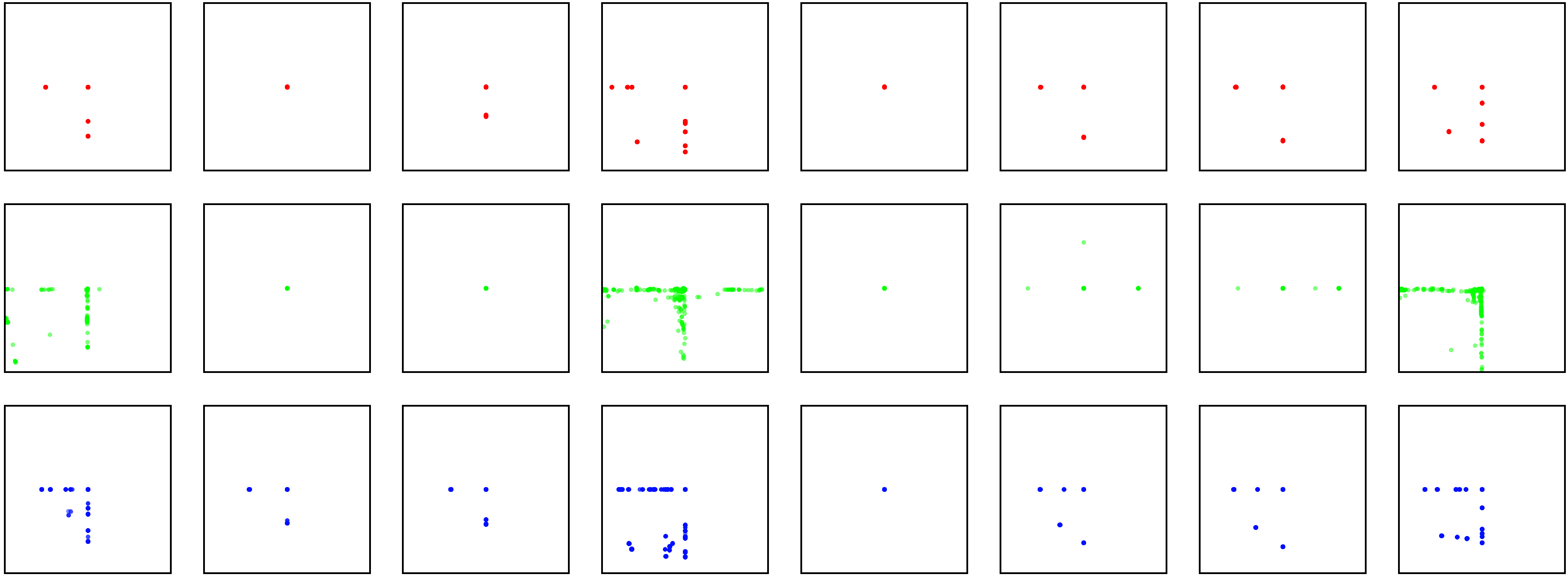}
    \caption{ 
    Visualization of the solutions' objective values for the conic section problem.
    {\color{red} Red} indicates the solutions by $\PD$ which we treat as ground truth.
    {\color{green} Green} and {\color{blue} blue} indicate the solutions by $\GOL$ and $\POL$ (proposed method) respectively.
    }
    \label{fig:mixed_ncvx_vis}
    \end{small}
\end{figure}

\begin{figure}[h]
    \centering
    \begin{small}
    \includegraphics[width=1.0\textwidth]{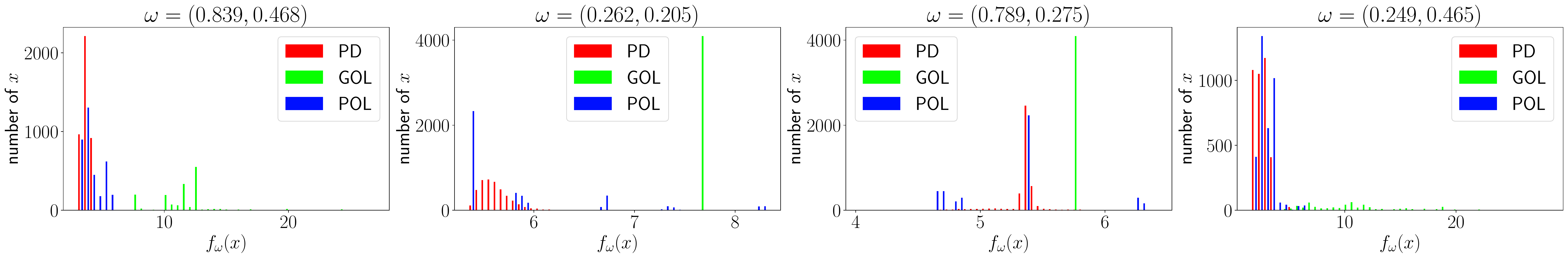}
    \caption{ 
    Histograms of the objectives for non-convex sparse recovery on 4 problem instances. 
    We denote $\param = (\alpha, p)$ at the top, corresponding to a sparsity-inducing function of the form $\alpha\norm{x}_p^p$.
    }
    \label{fig:mixed_ncvx_hist}
    \end{small}
\end{figure}

\textbf{Comparison with proximal gradient descent.} 
For $p=\nicefrac{1}{2}, \nicefrac{2}{3}$, thresholding formulas exist for the $\ell^p$ norm \citep{cao2013fast}. That is, the proximal operator of $\norm{x}_p^p$ has a closed-form.
This allows us to apply proximal gradient descent \citep{tibshirani2010proximal} to solve \eqref{eqn:sparse_recovery_sip}.
When $p = 1$ (i.e. when the problem reduces to LASSO), this reduces to the popular iterative soft-thresholding algorithm (ISTA) which converges significantly faster than gradient descent.

We compare the convergence speed of $\POL$ to that of proximal gradient descent (denoted $\PGD$) for the $p=\nicefrac{1}{2}$ case.
We also include $\PD$ for reference.
The generation of data (i.e. $A$ and $y$ in \eqref{eqn:sparse_recovery_sip}) is the same as before.
For $\POL$ we use the same setup with $\lambda = 10$ as before (corresponding to step size $0.1$) except we restrict $p$ to $\nicefrac{1}{2}$ during training and we train only for $1000$ steps (note the test-time $\alpha$ is unseen during training).
For $\PGD$ we use $0.04$ as the step size because using $0.05$ for the step size would lead to divergence of $\PGD$ --- the objective would go to infinity.
For $\PD$ we use $0.05$ as the step size.
We run all three methods for $200$ steps (for $\POL$ this corresponds to $200$ steps of PPA after training) and visualize the convergence and histograms of the objective for each method in \cref{fig:quasinorm_cmp}.
We see that $\POL$ converges faster than $\PGD$ even when $\PGD$ is highly specialized to the $p=\nicefrac{1}{2}$ case (where the thresholding formula has a closed form).

\begin{figure}[h]
    \centering
    \begin{small}
    \includegraphics[width=0.4\textwidth]{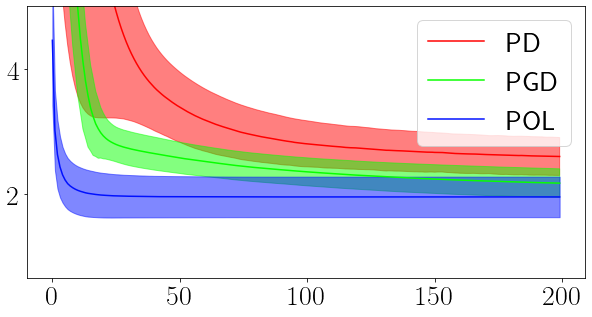}
    \includegraphics[width=0.4\textwidth]{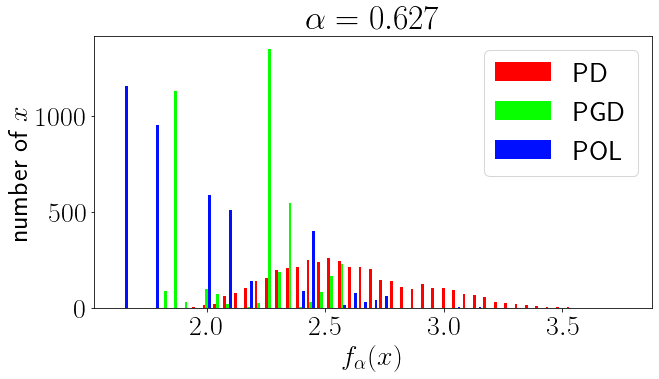}
    \caption{ 
    Convergence (left figure) and histograms (right figure) of the objective for the $p=\nicefrac{1}{2}$ non-convex sparse recovery problem with $\alpha=0.627$.
    For the convergence figure on the left, the horizontal axis shows the number of iterations, and the vertical axis shows the value of $f_\alpha(x)$, averaged over all current solutions (fill-in region's width indicates standard deviation).
    $\PGD$ denotes the proximal gradient descent method \citep{tibshirani2010proximal} with the closed-form thresholding formula by \citet{cao2013fast}.
    }
    \label{fig:quasinorm_cmp}
    \end{small}
\end{figure}

\textbf{Other sparsity-inducing regularizers.}
Our method can be applied to sparse recovery problems with other sparsity-inducing regularizers in a straightforward manner. Consider minimax concave penalty (MCP) from \citet{yang2020learning} defined component-wise as: 
\begin{align*}
    \MCP(x;\param) \defeq \left\{
    \begin{array}{ll}
        \abs{x} - \param x^2 & \abs{x} \le \frac{1}{2\param} \\
        \frac{1}{4\param} & \abs{x} > \frac{1}{2\param},
    \end{array}
    \right.
\end{align*}
which is $\param$-weakly convex.
We repeat the same setup as in \cref{sec:app_sparse_recovery} but with objectives
\begin{equation}
f_\param(x) \defeq \norm{Ax - y}_2^2 + \sum_{i=1}^d \MCP(x_i;\param)\label{eqn:mcp_obj}
,\end{equation}
for $\param \in [0.5, 2]$.
Note $\PGD$ is viable to solve \eqref{eqn:mcp_obj} because the proximal operator of $\MCP$ has a closed-form.
We run $\PGD$ for $2 \times 10^4$ iterations with step size $10^{-4}$ to make sure it converges fully.
We show the histogram of the solutions' objective values for $\PD$, $\GOL$, $\POL$, and $\PGD$ in \cref{fig:mcp_hist}. 
Our results are consistent with those in \cref{fig:mixed_ncvx_hist}: $\POL$ is on par with $\PD$ and significantly outperforms $\GOL$. $\POL$ also performs better than $\PGD$ which is only applicable because the regularizer $\MCP$ has a closed-form proximal operator.
\begin{figure}[h]
    \centering
    \begin{small}
    \includegraphics[width=1.0\textwidth]{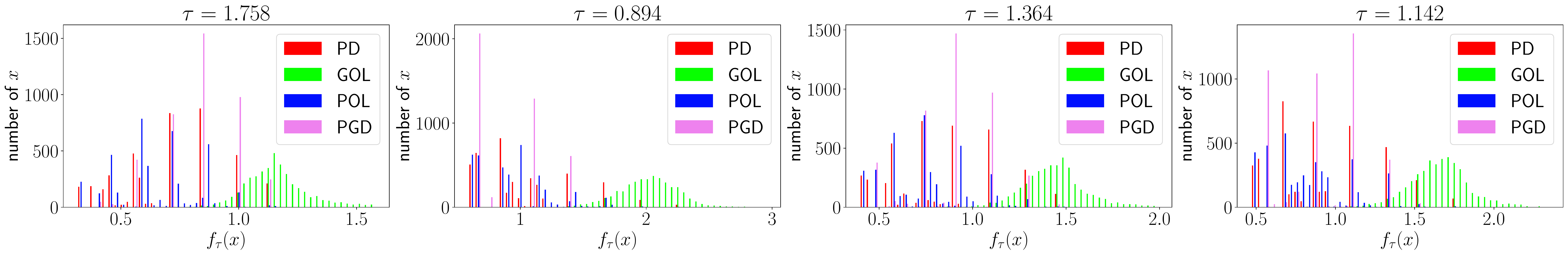}
    \caption{ 
    Histograms of the objectives for sparse recovery with minimax concave penalty (MCP) on 4 problem instances. 
    }
    \label{fig:mcp_hist}
    \end{small}
\end{figure}

\subsection{Rank-2 Relaxation of Max-Cut}\label{sec:result_max_cut}
\textbf{Setup.} 
An additional feature of \eqref{eqn:max_cut_circle} is that the variables are constrained to $(S^1)^8 \subset \RR^{16}$.
Hence for $\POL$ and $\GOL$ we always project the output of the operator network to the constrained set (normalizing to unit length before computing the loss or before iterating), while for $\PD$ we apply projection after each gradient step.

We generate a training dataset of $2^{20}$ graphs and a test dataset of $1024$ graphs using the procedure described in \cref{sec:app_max_cut}: half of the graphs will be Erd\H{o}s-R\'enyi graphs with $p=0.5$ and the remaining half being $K_8$ with edge weights drawn from $[0, 1]$ uniformly.
For $\PD$, we use learning rate $10^{-4}$.
For step sizes of $\POL$ and $\GOL$, we choose $\lambda = 10.0$ and $\eta = 10.0$.

We choose to directly feed the edge weight vector $\param \in \RR^{28}$ to the operator network (\cref{sec:nn_arch}).
We find this simple encoding works better than alternatives such as graph convolutional networks.
This is likely because $x\in \varspace = (S^1)^8$ requires order information from the encoded $\param$, so graph pooling operation can be detrimental for the operator network architecture.
Designing an equivariant operator network that is capable of effectively consuming larger graphs is an interesting direction for future work.

\textbf{Results.} 
If a cut happens to be a local minimum of the relaxation, then it is a maximum cut (Theorem 3.4 of \citet{burer2002rank}).
However, finding all the local minima of the relaxation is not enough to find all max cuts as max cuts can also appear as saddle points (see the discussion after Theorem 3.4 of \citet{burer2002rank}).
Hence solving the MSO \eqref{eqn:max_cut_circle} is not enough to identify all the max cuts.
Nevertheless, we can still compare $\POL$ and $\GOL$ against $\PD$ based solely on the relaxed MSO problem corresponding to the objective \eqref{eqn:max_cut_circle}.
\begin{figure}[h]
    \centering
    \begin{small}
    \includegraphics[width=1.0\textwidth]{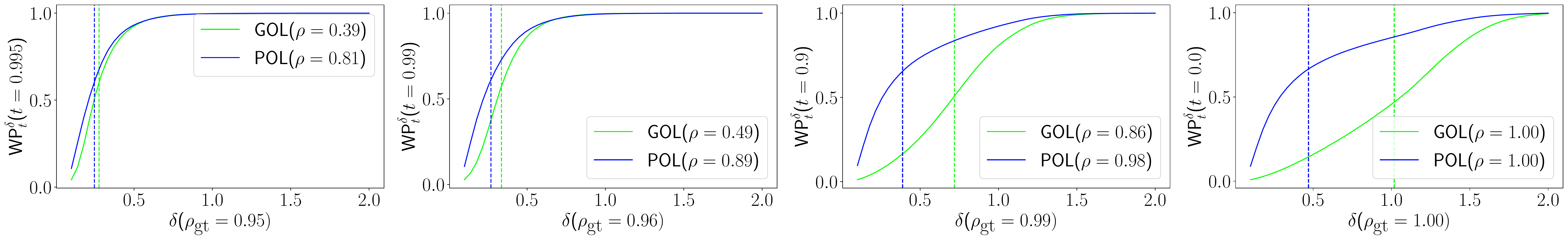}
    \caption{ 
    The plot of $\delta$ vs. $\WP_t^\delta$ for the max-cut problem.
    See the caption of \cref{fig:conic_wp} for the meaning of the symbols.
    As different edge weights lead to different minima values, we choose the threshold $t$ in a relative manner: the actual threshold used will be $t$ times the best objective value found by $\PD$.
    }
    \label{fig:maxcut_wp}
    \end{small}
\end{figure}
In \cref{fig:maxcut_wp}, we plot $\delta$ vs. $\WP_t^\delta$ \eqref{eqn:witness_metrics} to verify the quality of the solutions obtained by $\POL$ and $\GOL$ compared to $\PD$.
We see that $\POL$ more faithfully recovers the solutions generated by $\PD$ with consistently higher witnessed precision.

Empirically, we found the proposed $\POL$ can identify a diverse family of cuts.
We visualize the multiple cuts obtained by $\POL$ for a number of graphs in \cref{fig:maxcut_vis}.
Although some cuts are not maximal, they are likely due to the relaxation --- not all fractional solutions correspond to a cut --- and not because of the proposed method.
As evident in \cref{fig:maxcut_wp}, they are still very close to the local minima of \eqref{eqn:max_cut_circle} generated by $\PD$.

\subsection{Symmetry Detection of 3D Shapes}\label{sec:result_symdetect}
\textbf{Setup.} 
Since the variables in \eqref{eqn:symdetect_obj} is constrained to $\varspace = S^2 \times \RR_{\ge 0}$, we always project the output of the operator network to the constrained set: for $x = (n,d) \in \varspace$, we normalize $n$ to have unit length and take absolute value of $d$.
The same projection is applied after each gradient step in $\PD$.

To generate training and test datasets, we use the original train/test split of the MCB dataset \citep{sangpil2020large} but filter out meshes with more than $5000$ triangles and keep up to $100$ meshes per category to make the categories more balanced.
During each training iteration, a fresh batch of point clouds are sampled (these are $\param$'s) from the meshes in the current batch.
For step sizes, we choose $\lambda = 1.0$ for $\POL$ and $\eta = 10.0$ for $\GOL$.
The training of $\POL$ and $\GOL$ takes about $30$ hours.
For $\PD$, we run gradient descent for $500$ iterations for each model, which is sufficient for convergence.

We use the official implementation of DGCNN by \citet{wang2019dynamic} as the encoder with the modification that we change the input channels to $6$-dimension to consume oriented point clouds and we turn off the dropout layers which do not improve performance.

The objective \eqref{eqn:symdetect_obj} involves $s_\param$ which requires point-to-mesh projection.
We implemented custom CUDA functions to speed up the projection.
Even so, it remains the bottleneck of training.
Since $\GOL$ requires multiple evaluations, it is extremely slow and can take more than a week.
As such, we set $Q = 1$ in \eqref{eqn:gol}.
Both $\POL$ and $\GOL$ are trained for $10^5$ iterations with batch size $8$.
At test time iterating the operator networks does not need to evaluate the objective nor the $s_\param$'s; moreover, only point clouds are needed.

\textbf{Results.} 
We show the witness metrics in \cref{fig:symdetect_wp}; quantitatively, $\POL$ exhibits far higher witnessed precision values than $\GOL$.
\begin{figure}[t]
    \centering
    \includegraphics[width=1.0\textwidth]{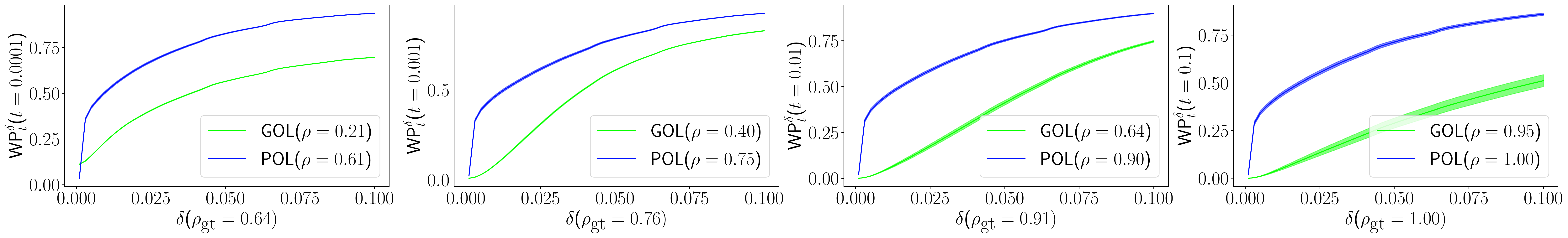}
    \caption{
    The plot of $\delta$ vs. $\WP_t^\delta$ for symmetry detection on the test dataset of \citet{sangpil2020large} ($t = 10^{-4}, 10^{-3}, 10^{-2}, 10^{-1}$).
    See the caption of \cref{fig:conic_wp} for the meaning of the various notations.
    We do not show $\WD_t$ as the vertical bars here because $\GOL$'s $\WD_t$ is much higher than $\POL$'s and is out of the range for the horizontal axis. 
    We sample $1024$ witnesses to compute $\WP_t^\delta$, averaged over $10$ trials of witness sampling (the fill-in region's width indicates the standard deviation).
    }
    \label{fig:symdetect_wp}
\end{figure}
We show a visualization of iterations of PPA with the learned proximal operator in \cref{fig:symdetect_vis_long}.
In particular, our method is capable of detecting complicated discrete reflectional symmetries as well as a continuous family of reflectional symmetries for cylindrical objects.

\subsection{Object Detection in Images}\label{sec:result_objdetect}
\textbf{Setup.} 
We use the training and validation split of COCO2017 \citep{lin2014microsoft} as the training and test dataset, keeping only images with at most $10$ ground truth bounding boxes.
For training, we use common augmentation techniques
such as random resize/crop, horizontal flip, and random RGB shift, 
to generate a $400\times 400$ patch from each training batch image, with batch size $32$.
For evaluation, we crop a $400\times 400$ image patch from each test image.
For step sizes, we choose $\lambda = 1.0$ for $\POL$ and $\eta = 1.0$ for $\GOL$.
We train both $\POL$ and $\GOL$ for $10^6$ steps.
This takes about $100$ hours.
To extract solutions, we use $100$ iterations for $\POL$ (for most images it only needs $5$ iterations to converge) and $1000$ iterations for $\GOL$ (the convergence is very slow so we run it for a large number of iterations).

We fine-tune PyTorch's pretrained ResNet-50 \citep{he2016deep} with the following modifications.
We first delete the last fully-connected layer. 
Then we add an additional linear layer to turn the $2048$ channels into $256$.
We then add sinusoidal positional encodings to pixels in the feature image output by ResNet-50 followed by a fully-connected layers with hidden layer sizes $256, 256, 256$.
Finally average pooling is used to obtain a single feature vector for the image.

For Faster R-CNN ($\FRCNN$), we use the pretrained model from PyTorch with ResNet-50 backbone and a regional proposal network.
It should be noted that $\FRCNN$ is designed for a different task that includes prediction of class labels, and thus it is trained with more supervision (object class labels) than our method and it uses additional loss terms for class labels.

For the alternative method $\FN$ that predicts a fixed number of boxes, we attach a fully-connected layer of hidden sizes $[256,256,80]$ with ReLU activation to consume the pooled feature vector from ResNet-50.
The output vector of dimension $80$ is then reshaped to $20 \times 4$, representing the box parameters of $20$ boxes.
We use chamfer distance between the set of predicted boxes and the set of ground truth boxes as the training loss.

\textbf{Results.} 
In \cref{tab:objdetect_results}, we compute witness metrics and traditional metrics including precision and recall.
As our method does not output confidence scores, we cannot use common evaluation metrics such as average precision.
To calculate precision and recall, which normally would require an order given by the confidence scores, we instead build a bipartite graph between the predicted boxes and the ground truth, adding an edge if the Intersection over Union (IoU) between two boxes is greater than $0.5$.
Then we consider predictions that appear in the Hungarian max matching as true positives, and the unmatched ones false positives. 
Then precision is defined as the number of true positives over the total number of predictions, while recall is defined as the number of true positives over the total number of ground truth boxes.
When computing metrics for $\POL$ and $\GOL$, we run mean-shift algorithm with RBF bandwidth $0.01$ to find the centers of clusters and use them as the predictions.
As shown in \cref{fig:objdetect_gradual_vis}, the clusters formed by $\POL$ are usually extremely sharp after a few steps, and any reasonable bandwidth will result in the same clusters. 

In \cref{fig:objdetect_vis_long}, we show the detection results by our method for a large number of test images chosen at random.

\begin{figure}[p]
    \centering
    \begin{small}
    \includegraphics[height=6.5in]{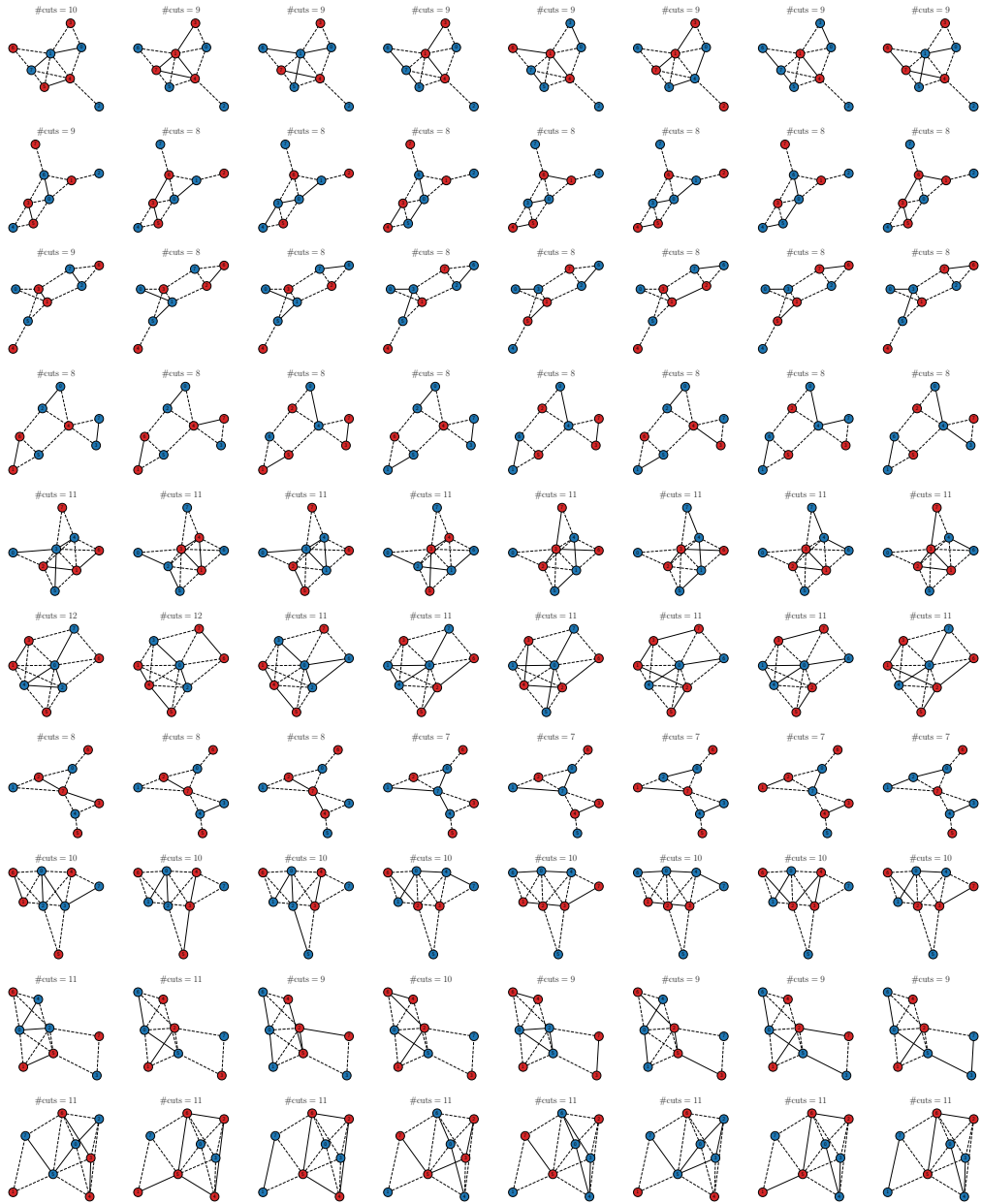}
    \caption{ 
    Visualization of the cuts obtained by applying Goemans-Williamson-type procedure to randomly selected solutions of $\POL$. 
    The graphs are chosen among the ones that have at least $8$ solutions uniformly at random without human intervention.
    Each row contains the multiple solutions for the same graph with binary weights.
    Two colors indicate the two vertex sets separated by the cut.
    Dashed lines indicate edges in the cut.
    For each graph we annotate the number of cuts on top.
    The $0$th vertex is always in blue to remove the obvious duplicates obtained by swapping the two colors on each vertex.
    }
    \label{fig:maxcut_vis}
    \end{small}
\end{figure}

\begin{figure}[p]
    \centering
    \includegraphics[width=0.95\textwidth]{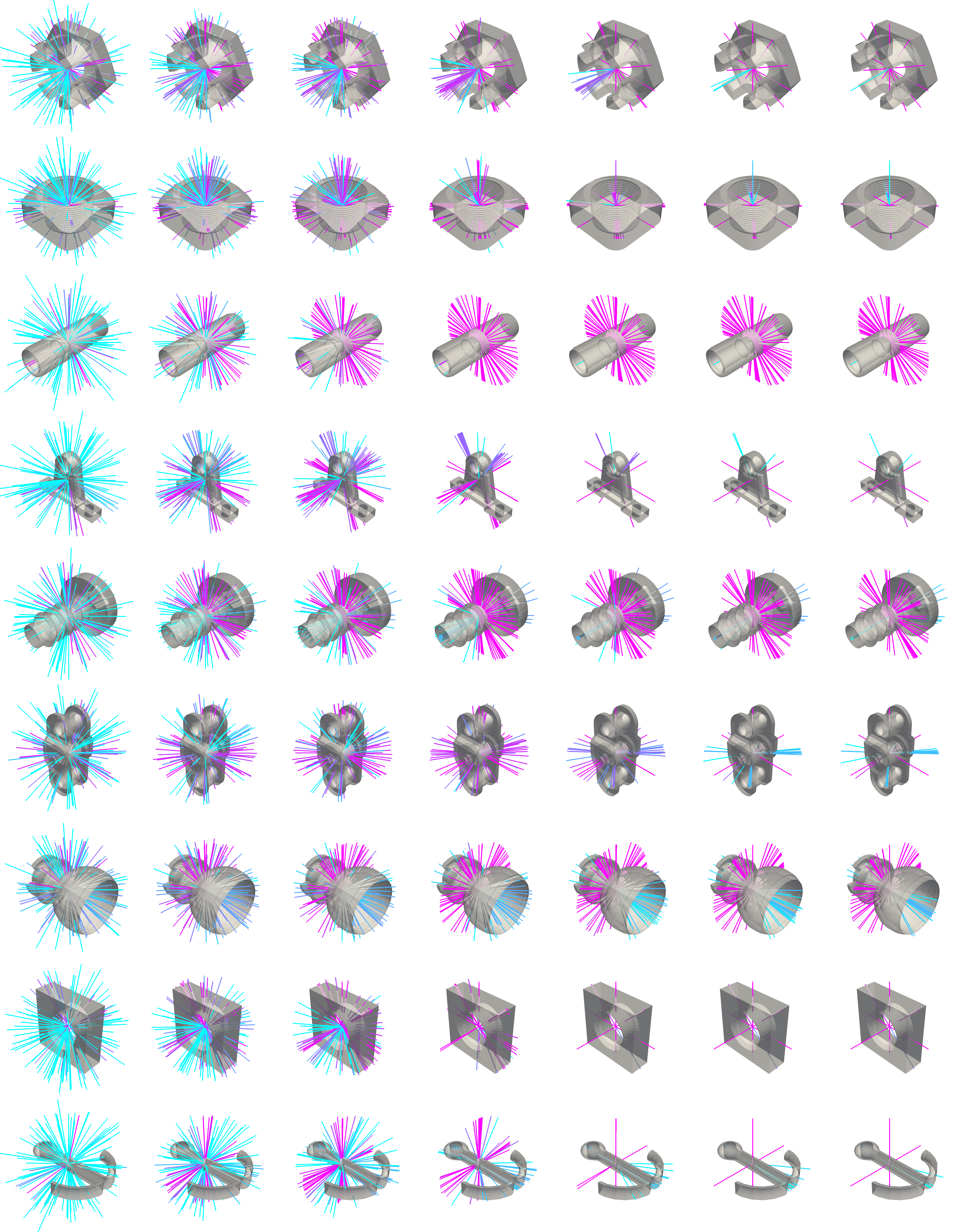}
    \caption{
    Visualization of PPA with learned proximal operator on selected models from the test dataset of \citet{sangpil2020large}.
    Iterations 0, 1, 2, 5, 10, 15, 20 are shown, where the 0th iteration contains the initial samples from $\unif(\varspace)$.
    Pink indicates lower objective value in \eqref{eqn:symdetect_obj}, while light blue indicates higher.
    }
    \label{fig:symdetect_vis_long}
\end{figure}

\begin{figure}[p]
    \centering
    \includegraphics[width=0.84\textwidth]{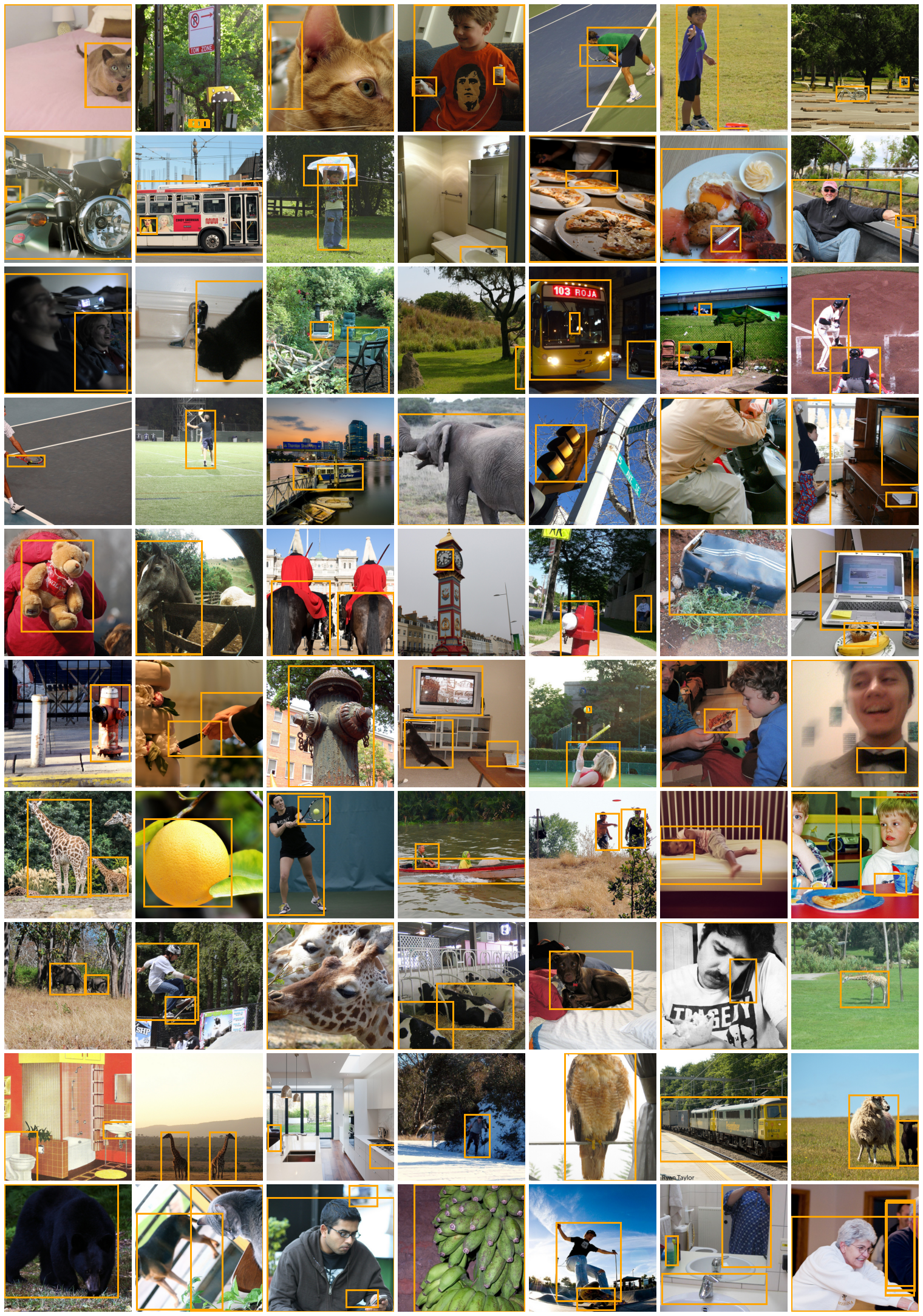}
    \caption{
    Randomly selected object detection results by $\POL$ on COCO17 validation split.
    Each test image is $400\times 400$ patch cropped from the original image with a random scaling by a number between $0.5$ and $1$.
    In each image we display all $1024$ bounding boxes without clustering, most of which are perfectly overlapping.
    For some images there is a bounding box for the whole image (check if the image has an orange border).
    There is no class label associated with each box.
    }
    \label{fig:objdetect_vis_long}
\end{figure}

\section{Connection to Wasserstein Gradient Flows}\label{sec:connect_wgf}
In this section we show that we can view \eqref{eqn:direct_prox_learn} as solving the JKO discretization of Wasserstein gradient flows at every time step, under the assumption that the measures along the JKO discretization are absolutely continuous.

If $\mathcal{F}(\mu)$ is a linear functional of the form $\mathcal{F}(\mu) = \int f \dd\mu$ on $\mathcal{P}(\varspace)$, the space of probability distributions in $\varspace$ with $\varspace$ compact, then the JKO discretization of the gradient flow of $\mathcal{F}$ at step $t+1$ with step size $1/\lambda$ is
\begin{align*}
\mu_{t+1} &= \argmin_{\mu \in \mathcal{P}_2(\varspace)} \left\{\int f \dd\mu + \frac{\lambda}{2} W_2^2(\mu_t, \mu) \right\},
\end{align*}
where $W_2$ is the Wasserstein-2 distance and we assume $\mu_t$ is absolutely continuous.
Let 
\[
\mathcal{F}(\mu) \defeq \int f \dd\mu + \frac{\lambda}{2} W_2^2(\mu_t, \mu).
\]
Let us also define another functional such that for a Borel map $T:\varspace \to \varspace$,
\[
\mathcal{G}(T) \defeq \int f(T(x)) \dd\mu_t(x) + \frac{\lambda}{2} \int \norm{T(x) - x}_2^2\dd\mu_t(x).
\]

First given $\mu \in \mathcal{P}(\varspace)$, since $\varspace$ is compact (so in particular all probability distributions have finite second moments), by Brenier's theorem \citep[Theorem 6.2.4]{ambrosio2005gradient}, there exists a Borel map $T$ (the \emph{Monge map}) such that $T_\# \mu_t = \mu$ and $W_2^2(\mu_t, \mu) = \int \norm{T(x)-x}_2^2 \dd\mu_t(x)$.
Hence for such $\mu$ and $T$ we have $\mathcal{G}(T) = \mathcal{F}(\mu)$, and thus $\min_{\mu \in \mathcal{P}_2(\RR^d)} \mathcal{F}(\mu) \ge \min_{T} \mathcal{G}(T)$.

Next given a Borel $T$, let $\mu = T_\# \mu_t$. By Brenier's theorem, let $T'$ be the Monge map corresponding to $W_2(\mu_t, \mu)$ so that $\mu = T'_\# \mu_t$ and $\int \norm{T'(x)-x}_2^2 \dd\mu_t(x) = W_2(\mu_t, \mu) \le \int \norm{T(x)-x}_2^2 \dd\mu_t(x)$.
This shows that $\mathcal{F}(\mu) \le \mathcal{G}(T)$ and hence $\min_{\mu \in \mathcal{P}_2(\RR^d)} \mathcal{F}(\mu) \le \min_{T} \mathcal{G}(T)$.
Thus $\min_{\mu \in \mathcal{P}_2(\RR^d)} \mathcal{F}(\mu) = \min_{T} \mathcal{G}(T)$.

If $\mu_t$ has full support, then the best $T^* \defeq \argmin_T \mathcal{G}(T)$ is obtained pointwise and it becomes the proximal operator of $f$ (cf. \eqref{eqn:direct_prox_learn}). In particular, $T^*$ does not depend on $\mu_t$.

\end{document}